\newcommand{\E}{\mathbb{E}}
\providecommand{\diag}{{\rm diag}}
\providecommand{\argmax}{{\rm argmax}}
\providecommand{\argmin}{{\rm argmin}}
\newcommand{\x}[1]{x^{(#1)}}
\newcommand{\z}[1]{z^{(#1)}}
\newcommand{\y}[1]{y^{(#1)}}
\newcommand{\qEI}{\text{\it EI}}
\newcommand{\qEIg}{\text{\it d-EI}}
\newcommand{\qKG}{\text{\it KG}}
\newcommand{\qKGg}{\text{\it d-KG}}
\newtheorem{lemma}{Lemma}
\newtheorem{proposition}{Proposition}
\newtheorem{theorem}{Theorem}
\theoremstyle{definition}
\numberwithin{equation}{section}
\newcommand{\eq}[1]{(\ref{eq:#1})}
\newcommand{\eqn}[1]{(\ref{eqn:#1})}
\newcommand{\lem}[1]{Lemma~\ref{lem:#1}}
\newcommand{\thr}[1]{Theorem~\ref{thr:#1}}
\newcommand{\sectn}[1]{Section~\ref{sect:#1}}
\newcommand{\sect}[1]{\ref{sect:#1}}
\title{Bayesian Optimization with Gradients}
\author{
Jian Wu $^1$ \quad
Matthias Poloczek $^2$ \quad
Andrew Gordon Wilson $^1$ \quad
Peter I. Frazier $^1$  \\
$^1$ Cornell University, $^2$ University of Arizona
}
\begin{document}
	
\maketitle

\begin{abstract}
Bayesian optimization has been successful at
global optimization of expensive-to-evaluate multimodal objective functions.
However, unlike most optimization methods, Bayesian optimization typically does not use derivative information.  
In this paper we show how Bayesian optimization can exploit derivative information to find good solutions with fewer objective function evaluations.
In particular, we develop a novel Bayesian optimization algorithm, the derivative-enabled knowledge-gradient ($\qKGg$), which is one-step Bayes-optimal,
asymptotically consistent, and provides greater one-step value of information than in the derivative-free setting.
$\qKGg$ accommodates noisy and incomplete derivative information, comes in both sequential and batch forms,
and can optionally reduce the computational cost of inference through automatically selected retention of a single directional derivative.
We also compute the $\qKGg$ acquisition function and its gradient using a novel fast discretization-free technique.
We show $\qKGg$ provides state-of-the-art performance compared to a wide range of optimization procedures with and without gradients, on benchmarks including logistic regression, deep learning, kernel learning, and k-nearest neighbors.
\end{abstract}

\section{Introduction}
Bayesian optimization \citep{brochu2010bayesian, jones1998efficient} is able to find global optima with a remarkably small number of potentially noisy objective function evaluations.
Bayesian optimization has thus been particularly successful for automatic hyperparameter tuning of machine learning algorithms  \citep{snoek2012practical, swersky2013multi, Gelbart14, gardner2014bayesian}, where objectives can be extremely expensive to evaluate, noisy, and multimodal.

Bayesian optimization supposes that the objective function (e.g., the predictive performance with respect to some hyperparameters) is drawn from a prior distribution over functions, typically a Gaussian process (GP), maintaining a posterior as we observe the objective in new places.
Acquisition functions, such 
as expected improvement \citep{jones1998efficient,HuAlNoZe06,pgrc13}, upper confidence bound \citep{srinivas2010gaussian}, predictive entropy search \citep{hernandez2014predictive} or the knowledge gradient \citep{scott2011correlated}, determine a balance between exploration and exploitation, to decide where to query the objective next.  By choosing points with the largest acquisition function values, one seeks to identify a global optimum using as few objective function evaluations as possible.

Bayesian optimization procedures do not generally leverage derivative information, beyond a few exceptions described in Sect.~\ref{sect:related-work}.  By contrast, other types of continuous optimization methods \citep{snyman2005practical} use gradient information extensively.  The broader use of gradients for optimization suggests that gradients should also be quite useful in Bayesian optimization:
(1) Gradients inform us about the objective's {\it relative} value as a function of location, which is well-aligned with optimization.
(2) In $d$-dimensional problems, gradients provide $d$ distinct pieces of information about the objective's relative value in each direction, constituting $d+1$ values per query together with the objective value itself.  
This advantage is particularly significant for high-dimensional problems.
(3) Derivative information is available in many applications at little additional cost.  Recent work \citep[e.g.,][]{maclaurin2015gradient} makes gradient information available for hyperparameter tuning.  Moreover, in the optimization of engineering systems modeled by partial differential equations, which pre-dates most hyperparameter tuning applications \citep{forrester2008engineering}, adjoint methods provide gradients cheaply~\citep{ple06,jam99}.  And even when derivative information is not readily available, we can compute approximative derivatives in parallel through finite differences.

In this paper, we explore the ``what, when, and why'' of Bayesian optimization with derivative information.  We also develop a Bayesian optimization algorithm that effectively leverages gradients in hyperparameter tuning to outperform the state of the art.  This algorithm accommodates incomplete and noisy gradient observations, can be used in both the sequential and batch settings, and can optionally reduce the computational overhead of inference by selecting the single most valuable directional derivatives to retain.  For this purpose, we develop a new acquisition function, called the derivative-enabled knowledge-gradient ($\qKGg$).  $\qKGg$ generalizes the previously proposed batch knowledge gradient method of \citet{wu2016parallel} to the derivative setting, 
and replaces its approximate discretization-based method for calculating the knowledge-gradient acquisition function by a novel faster exact discretization-free method.  We note that this discretization-free method is also of interest beyond the derivative setting, as it can be used to improve knowledge-gradient methods for other problem settings.
We also provide a theoretical analysis of the $\qKGg$ algorithm, showing (1) it is one-step Bayes-optimal by construction when derivatives are available; (2) that it provides one-step value greater than in the derivative-free setting, under mild conditions; and (3) that its estimator of the global optimum is asymptotically consistent.

In numerical experiments we compare with state-of-the-art batch Bayesian optimization algorithms with and without derivative information, and the gradient-based optimizer BFGS with full gradients. 

We assume familiarity with GPs and Bayesian optimization, for which we recommend \citet{rasmussen2006gaussian} and \citet{shahriari2016taking} as a review. In \sectn{related-work} we begin by describing related work.  
In Sect.~\ref{section_algorithm} we describe our Bayesian optimization algorithm exploiting derivative information.
In Sect.~\sect{section_exp} we compare the performance of our algorithm with several competing methods on a collection of synthetic and real problems.

The code for this paper is available at \url{https://github.com/wujian16/Cornell-MOE}. 

\section{Related Work}
\label{sect:related-work}

\citet{osborne2009gaussian} proposes fully Bayesian optimization procedures that use derivative observations to improve the conditioning of the GP covariance matrix.  Samples taken near previously observed points use only the derivative information to update the covariance matrix. Unlike our current work, derivative information does not affect the acquisition function. We directly compare with \citet{osborne2009gaussian} within the KNN benchmark in Sect.~\sect{knn}.

\citet[Sect. 4.2.1 and Sect. 5.2.4]{lizotte2008practical} incorporates derivatives into Bayesian optimization, modeling the derivatives
of a GP as in \citet[Sect. 9.4]{rasmussen2006gaussian}.
\citet{lizotte2008practical} shows that Bayesian optimization with the expected improvement (EI) acquisition function and complete gradient information at each sample can outperform BFGS.
Our approach has six key differences: (i) we allow for noisy and incomplete derivative information; (ii) we develop a novel acquisition function that outperforms EI with derivatives; (iii) we enable batch evaluations; (iv) we implement and compare batch Bayesian optimization with derivatives across several acquisition functions, on benchmarks and new applications such as kernel learning, logistic regression, deep learning and k-nearest neighbors, further revealing empirically where gradient information will be most valuable; (v) we provide a theoretical analysis of Bayesian optimization with derivatives; (vi) we develop a scalable implementation.

Very recently, \citet{koistinen2016minimum} uses GPs with derivative observations for minimum energy path calculations of atomic rearrangements and \citet{ahmed2016we} studies expected improvement with gradient observations. In \citet{ahmed2016we}, a randomly selected directional derivative is retained in each iteration for computational reasons, which is similar to our approach of retaining a single directional derivative, though differs in its random selection in contrast with our value-of-information-based selection. Our approach is complementary to these works.

For batch Bayesian optimization, several recent algorithms have been proposed that choose a set of points to evaluate in each iteration
\citep{snoek2012practical, wang2015parallel, marmin2016efficient,contal2013parallel,desautels2014parallelizing, kathuria2016batched,shah2015parallel, gonzalez2016batch}.  
Within this area, our approach to handling batch observations is most closely related to the batch knowledge gradient ($\qKG$) of~\citet{wu2016parallel}.
We generalize this approach to the derivative setting, and provide a novel exact method for computing the knowledge-gradient acquisition function that avoids the discretization used in \citet{wu2016parallel}.  This generalization improves speed and accuracy, and is also applicable to other knowledge gradient methods in continuous search spaces. 

Recent advances improving both access to derivatives and computational tractability of GPs make Bayesian optimization with gradients increasingly practical and timely for discussion.

\section{Knowledge Gradient with Derivatives}
\label{section_algorithm}
Sect.~\ref{sect:gaussian} reviews a general approach to incorporating derivative information into GPs for Bayesian optimization.  Sect.~\ref{sect:section_qkgg} introduces a novel acquisition function $\qKGg$, based on the knowledge gradient approach, which utilizes derivative information. Sect.~\ref{sect:computation} computes this acquisition function and its gradient efficiently using a novel fast discretization-free approach. Sect.~\ref{sect:theory} shows that this algorithm provides greater value of information than in the derivative-free setting, is one-step Bayes-optimal, and is asymptotically consistent when used over a discretized feasible space.

\subsection{Derivative Information}
\label{sect:gaussian}
Given an expensive-to-evaluate function~$f$, we wish to find~$\argmin_{x \in \mathbb{A}} f(x)$, where~$\mathbb A \subset \mathbb{R}^d$ is the domain of optimization.
We place a GP prior over $f\colon\mathbb{A} \to \mathbb{R}$, which is specified by its mean function $\mu\colon\mathbb A \rightarrow \mathbb{R}$
and kernel function $K\colon\mathbb A \times \mathbb A \rightarrow \mathbb{R}$. 
We first suppose that for each sample of $x$ we observe the function value and all $d$ partial derivatives, possibly with independent normally distributed noise, and then later discuss relaxation to observing only a single directional derivative.

Since the gradient is a linear operator, the gradient of a GP is also a GP (see also Sect.~9.4 in~\citet{rasmussen2006gaussian}), and the function and its gradient follow a multi-output GP with mean function~$\tilde{\mu}$ and kernel function $\tilde{K}$ defined below:
\begin{eqnarray}
\label{eqn: matrixblock}
\begin{split}
\tilde \mu(x) = \left(\mu(x), \nabla \mu(x)\right)^T, \quad
\tilde K(x, x') =\begin{pmatrix}
K(x, x') & J(x, x')\\
J(x', x)^T & H(x, x')
\end{pmatrix}
\end{split}
\end{eqnarray}
where $J(x, x') = \left(\frac{\partial K(x, x')}{\partial x'_1}, \cdots ,  \frac{\partial K(x, x')}{\partial x'_d}\right)$ and $H(x, x')$ is the $d \times d$ Hessian of $K(x, x')$.

When evaluating at a point $x$, we observe the noise-obscured function value $y(x)$ and gradient $\nabla y(x)$.  Jointly, these observations form a~$(d+1)$-dimensional vector with conditional distribution
\begin{equation}
\begin{split}
\left(y(x), \nabla y(x)\right)^T \Bigm| f(x), \nabla f(x) \ \sim\  \mathcal{N} \left(\begin{pmatrix}f(x), \nabla f(x)\end{pmatrix}^T, \diag{(\sigma^2(x))}\right),
\end{split}
\label{eq:measure}
\end{equation}
where~$\sigma^2: \mathbb{A} \rightarrow \mathbb{R}^{d+1}_{\geq 0}$ gives the variance of the observational noise. 
If $\sigma^2$ is not known, we may estimate it from data. 
The posterior distribution is again a GP.  
We refer to the mean function of this posterior GP after $n$ samples as $\tilde{\mu}^{(n)}(\cdot)$ and its kernel function as $\tilde{K}^{(n)}(\cdot, \cdot)$. Suppose that we have sampled at $n$ points $X := \{\x{1}, \x{2}, \cdots, \x{n}\}$ and observed~$(y, \nabla y)^{(1:n)}$, where each observation consists of the function value and the gradient at~$\x{i}$. Then $\tilde{\mu}^{(n)}(\cdot)$ and $\tilde{K}^{(n)}(\cdot, \cdot)$ are given by
\begin{eqnarray}
\tilde{\mu}^{(n)}(x) &=& \tilde \mu(x) + \tilde K(x,X) \nonumber\\
&&\left(\tilde K(X,X)+\text{diag}\{\sigma^2(\x{1}),\cdots, \sigma^2(\x{n})\}\right)^{-1}  \left((y, \nabla y)^{(1:n)}-\tilde \mu(X)\right)\nonumber\\
\tilde{K}^{(n)}(x, x') &=&  \tilde K(x,x')-\tilde K(x,X)\left(\tilde K(X,X)+\text{diag}\{\sigma^2(\x{1}),\cdots, \sigma^2(\x{n})\}\right)^{-1} \tilde K(X,x').\nonumber\\
\label{eq:posterior}
\end{eqnarray}
If our observations are incomplete, then we remove
the rows and columns in $(y,\nabla y)^{(1:n)}$, $\tilde{\mu}(X)$, $\tilde{K}(\cdot , X)$, $\tilde{K}(X, X)$ and $\tilde{K}(X, \cdot)$ of Eq.~(\ref{eq:posterior}) corresponding to partial derivatives (or function values) that were not observed. If we can observe directional derivatives, then we add rows and columns corresponding to these observations, where entries in $\tilde{\mu}(X)$ and $\tilde{K}(\cdot,\cdot)$ are obtained by noting that a directional derivative is a linear transformation of the gradient.

\subsection{The  $\qKGg$ Acquisition Function}
\label{sect:section_qkgg}
We propose a novel Bayesian optimization algorithm to exploit available derivative information, based on the knowledge gradient approach \citep{frazier2009knowledge}.  We call this algorithm the \emph{derivative-enabled knowledge gradient} ($\qKGg$). 

The algorithm proceeds iteratively, selecting in each iteration a batch of~$q$ points in~$\mathbb A$ that has a maximum \emph{value of information} (VOI).
Suppose we have observed~$n$ points, and recall from \sectn{gaussian} that $\tilde{\mu}^{(n)}(x)$ is the $(d+1)$-dimensional vector giving the posterior mean for~$f(x)$ and its~$d$ partial derivatives at~$x$. 
Sect.~\ref{sect:gaussian} discusses how to remove the assumption that all~$d+1$ values are provided. 

The expected value of~$f(x)$ under the posterior distribution is~$\tilde{\mu}_1^{(n)}(x)$.
If after $n$ samples we were to make an irrevocable (risk-neutral) decision now about the solution to our overarching optimization problem and receive a loss equal to the value of $f$ at the chosen point, we would choose $\argmin_{x \in \mathbb{A}}  \tilde{\mu}_1^{(n)}(x)$ and suffer conditional expected loss $\min_{x \in \mathbb{A}}  \tilde{\mu}_1^{(n)}(x)$.  Similarly, if we made this decision after $n+q$ samples our conditional expected loss would be $\min_{x \in \mathbb{A}}  \tilde{\mu}_1^{(n+q)}(x)$.
Therefore, we define the $\qKGg$ factor for a given set of~$q$ candidate points~$\z{1:q}$ as
\begin{eqnarray}
\qKGg (\z{1:q}) &=& \min_{x \in \mathbb{A}} \tilde{\mu}_1^{(n)}(x)-\mathbb{E}_n\left[\min_{x \in \mathbb{A}} \tilde{\mu}_1^{(n+q)}(x) \Bigm| \x{(n+1):(n+q)} = \z{1:q} \right],\nonumber\\
\label{eqn:multiKG}
\end{eqnarray}
where $\mathbb{E}_n \left[ \cdot \right]$ is the expectation taken with respect to the posterior distribution after $n$ evaluations, and the distribution of $\tilde{\mu}_1^{(n+q)}(\cdot)$ under this posterior marginalizes over the observations~$\left(y(\z{1:q}), \nabla y(\z{1:q})\right) = \left( y(z^{(i)}), \nabla y(z^{(i)})  : i =1,\ldots,q \right)$ upon which it depends.
We subsequently refer to Eq.~(\ref{eqn:multiKG}) as the \emph{inner optimization problem}.

The $\qKGg$ algorithm then seeks to evaluate the batch of points next that maximizes the $\qKGg$ factor, 
\begin{equation}
\label{Eq_outer_opt} 
\max_{\z{1:q} \subset \mathbb{A}} \qKGg (\z{1:q}).
\end{equation}
We refer to Eq.~(\ref{Eq_outer_opt}) as the \emph{outer optimization problem}. 
$\qKGg$ solves the outer optimization problem using the method described in \sectn{computation}.

The $\qKGg$ acquisition function differs from the batch knowledge gradient acquisition function in \citet{wu2016parallel} because here the posterior mean $\tilde{\mu}_1^{(n+q)}(x)$ at time~$n{+}q$ depends on $\nabla y(\z{1:q})$.  This in turn requires calculating the distribution of these gradient observations under the time-$n$ posterior and marginalizing over them. 
Thus, the $\qKGg$ algorithm differs from $\qKG$ not just in that gradient observations change the posterior, but also in that the prospect of {\it future} gradient observations changes the acquisition function.  
An additional major distinction from \citet{wu2016parallel} is that $\qKGg$ employs a novel discretization-free method for computing the acquisition function (see \sectn{computation}).

Fig.~\ref{Fig_tutorial} illustrates the behavior of $\qKGg$ and $\qEIg$ on a $1$-d example.  $\qEIg$ generalizes expected improvement (EI) to batch acquisition with derivative information \cite{lizotte2008practical}. $\qKGg$ clearly chooses a better point to evaluate than $\qEIg$.

\begin{figure*}
\vspace{-2mm}
\centering
  \subfigure
  \centering
  \vspace{-2mm}
\includegraphics[scale=0.212]{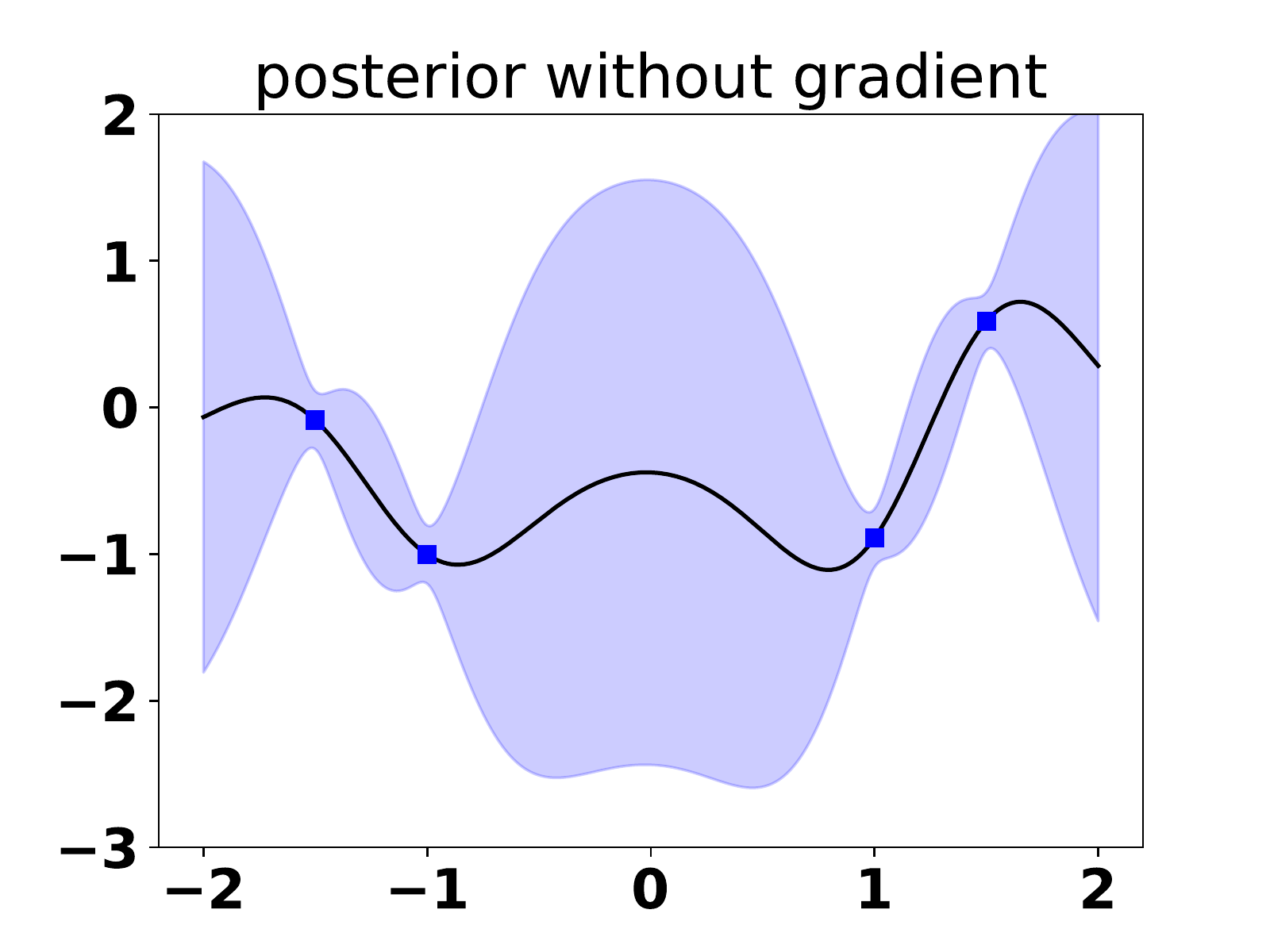}\includegraphics[scale=0.212]{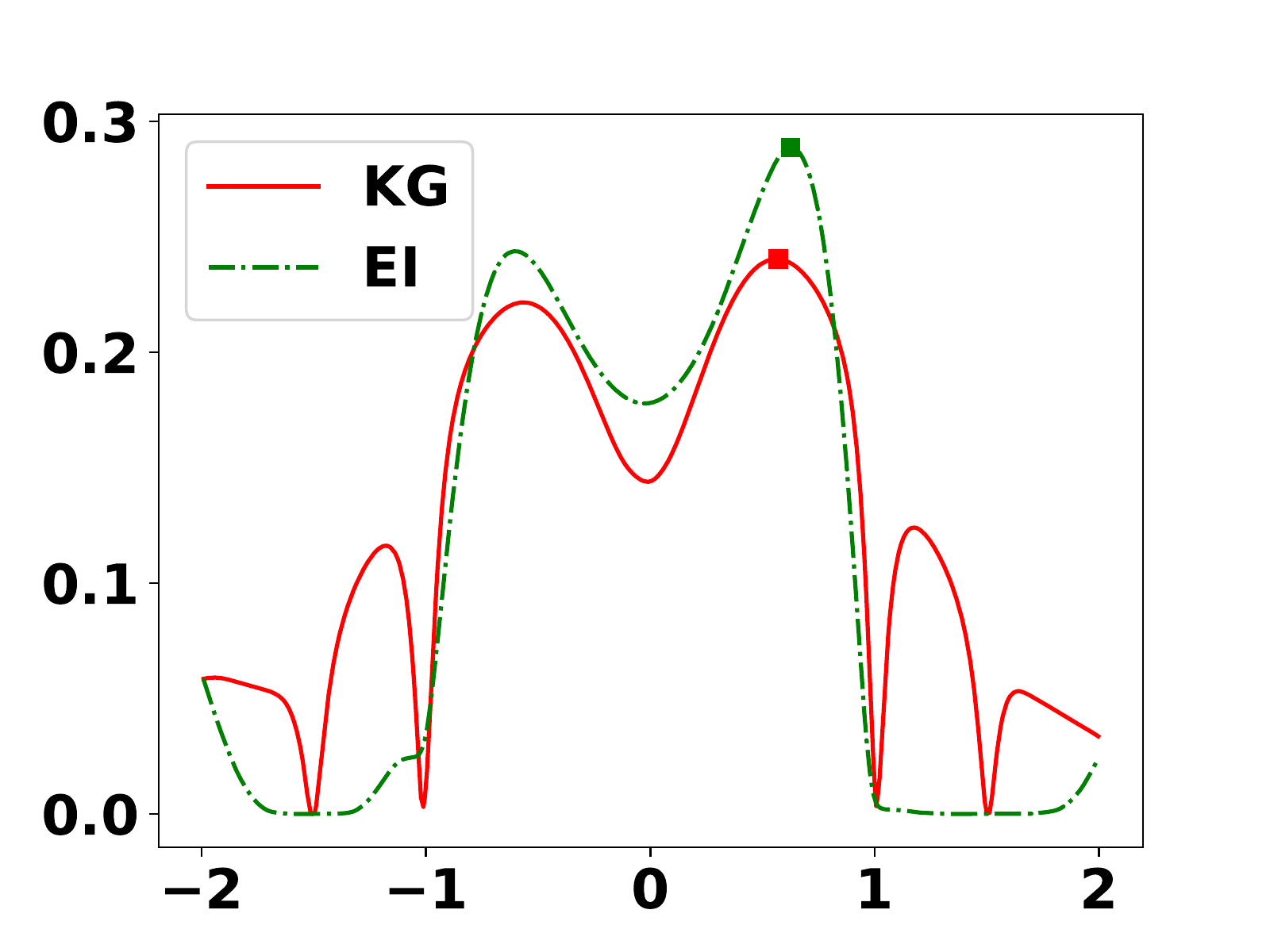}
\includegraphics[scale=0.212]{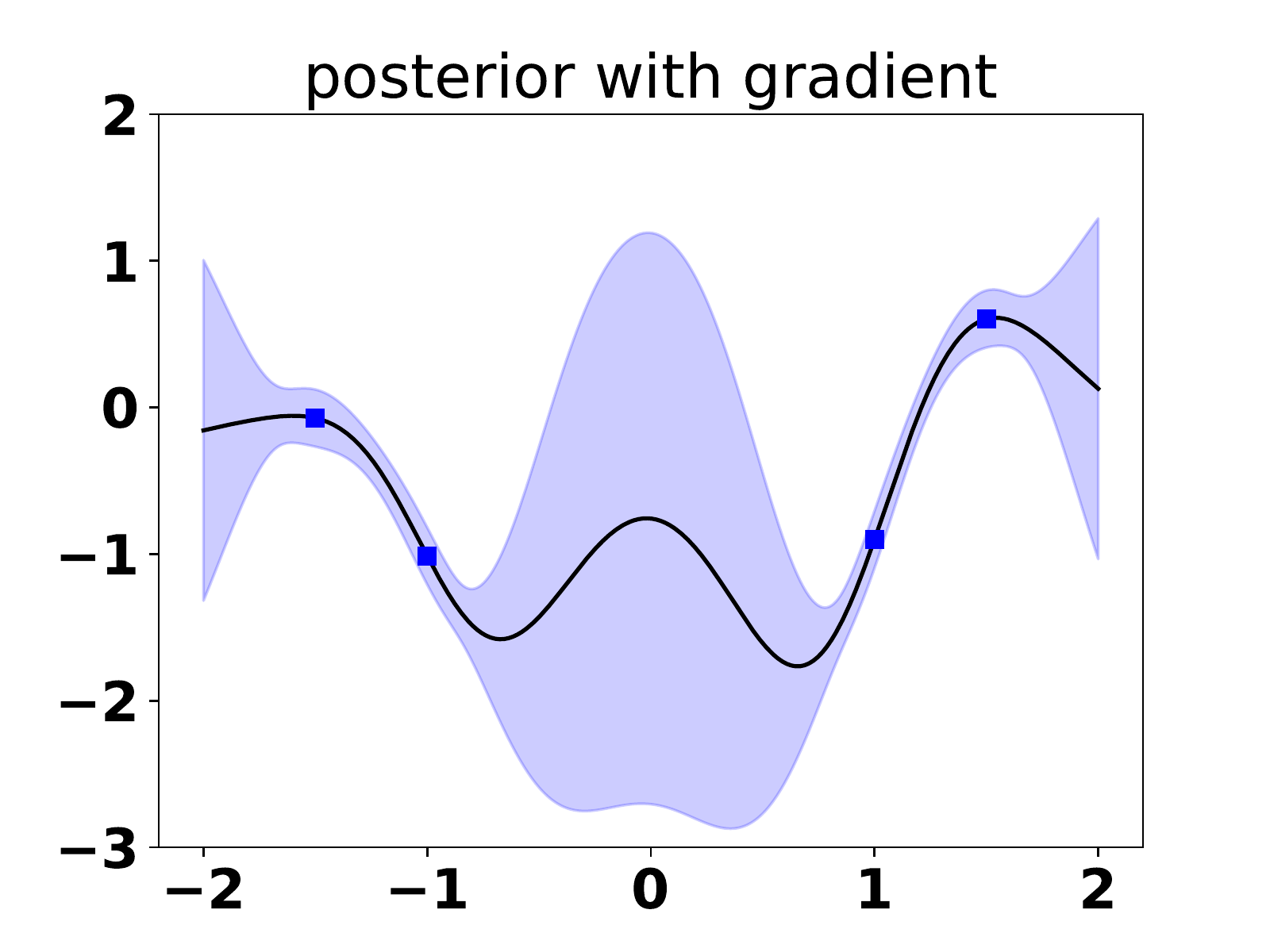}\includegraphics[scale=0.212]{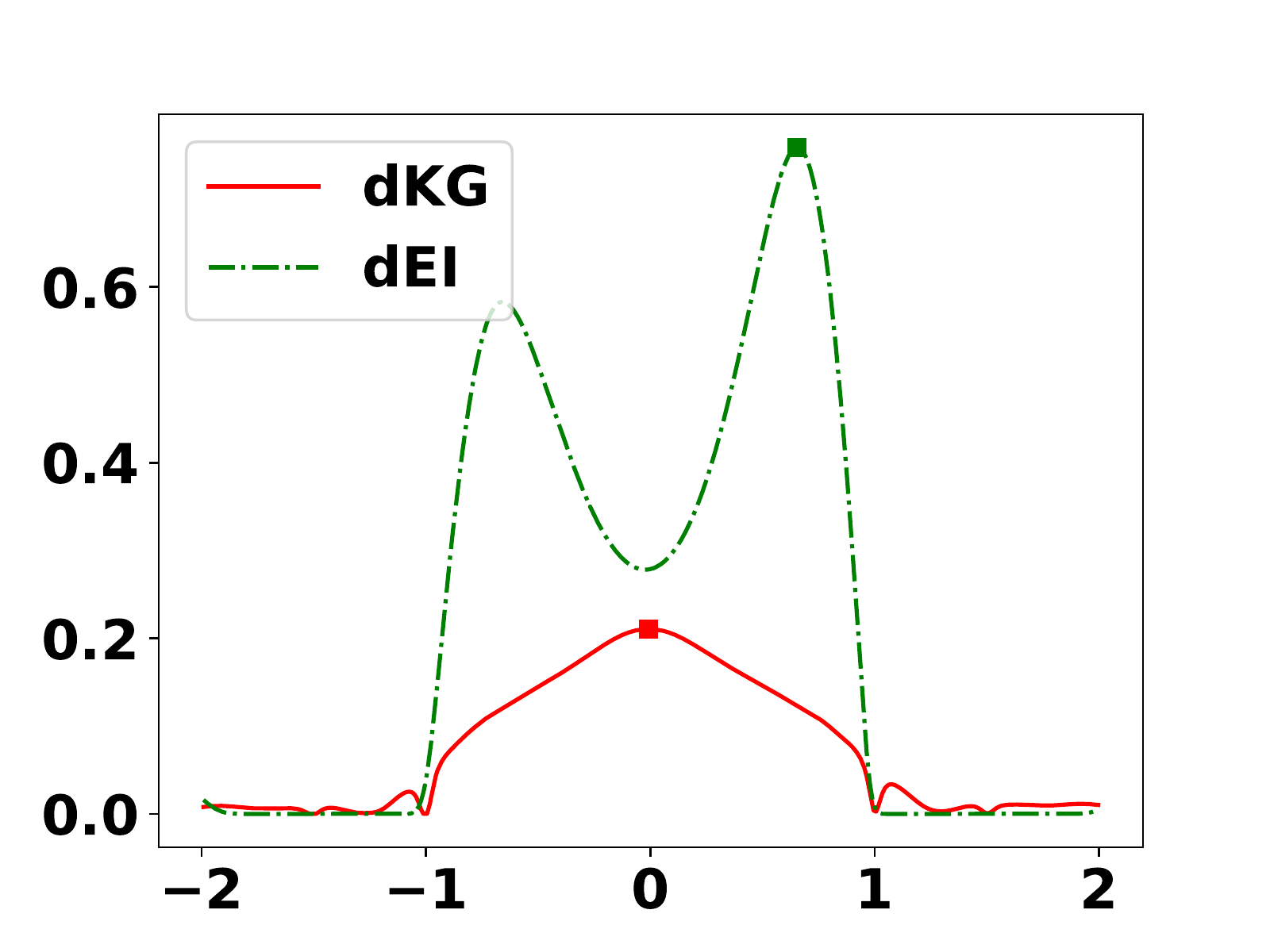}
  \subfigure
  \centering
  \vspace{-2mm}
\includegraphics[scale=0.212]{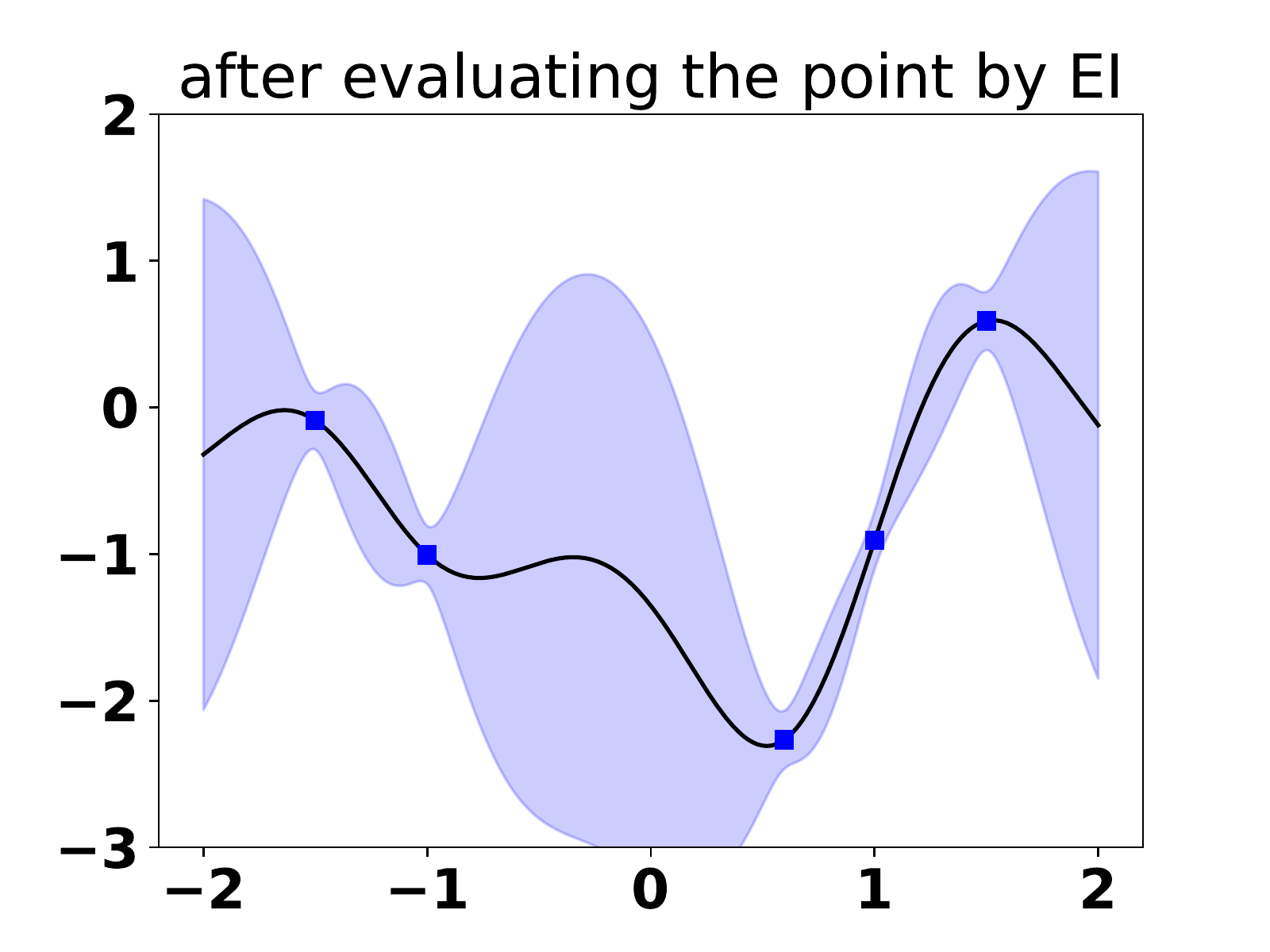}\includegraphics[scale=0.212]{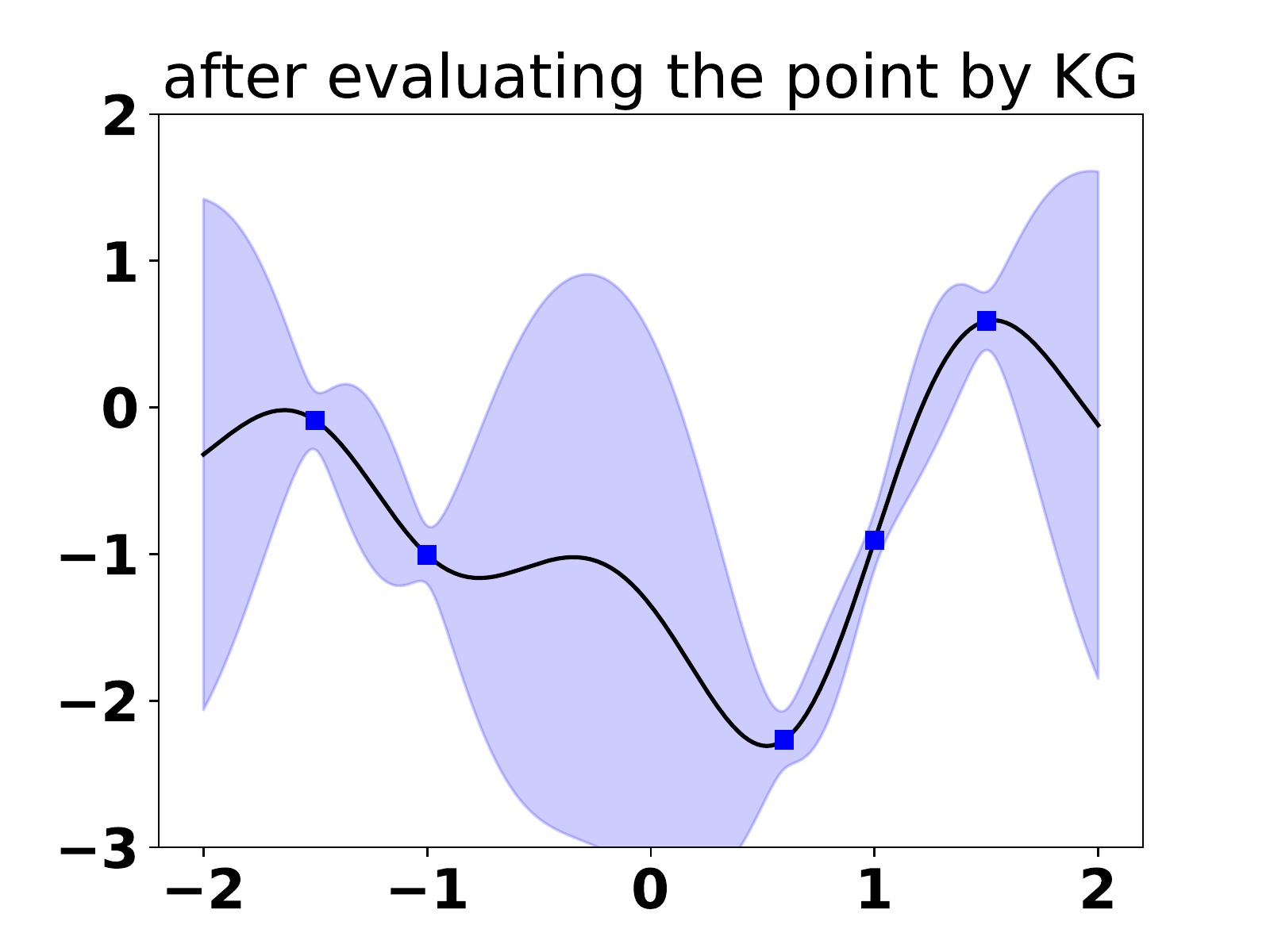}
\includegraphics[scale=0.212]{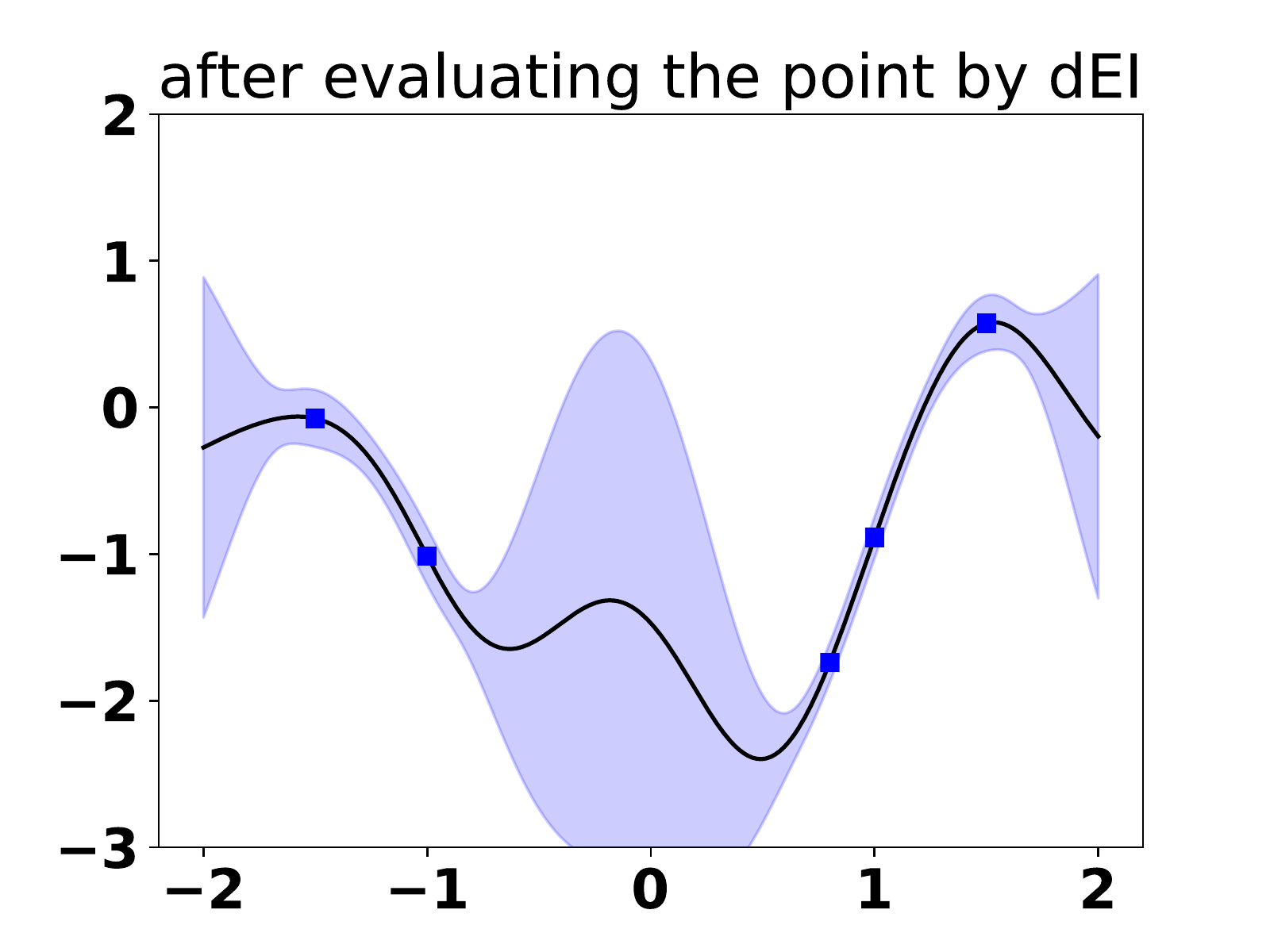}\includegraphics[scale=0.212]{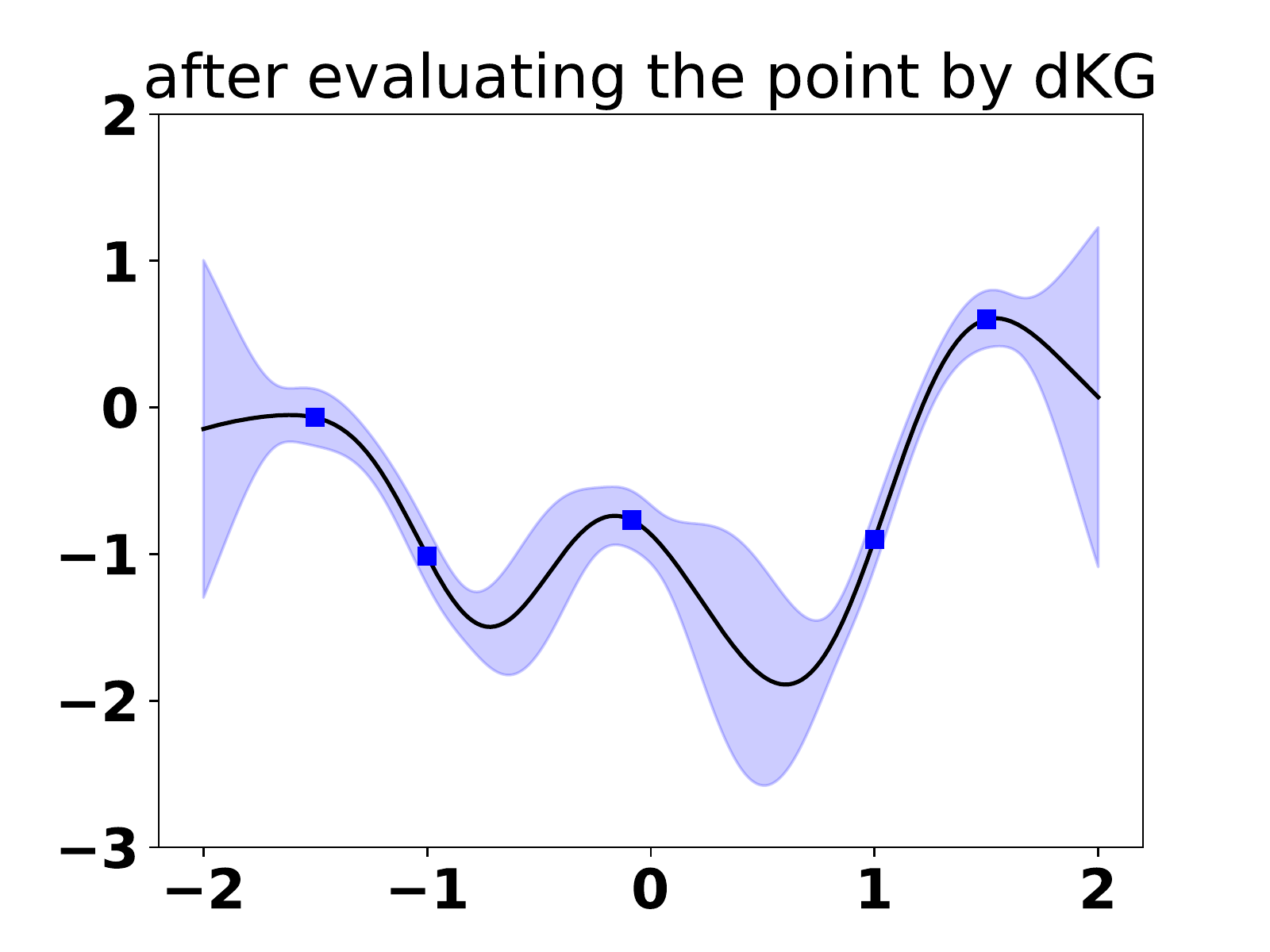}
\vspace{-2mm}
\caption{\small KG \citep{wu2016parallel} and EI \citep{wang2015parallel} refer to acquisition functions without gradients.  $\qKGg$ and $\qEIg$ refer to the counterparts with gradients. The topmost plots show (1) the posterior surfaces of a function sampled from a one dimensional GP without and with incorporating observations of the gradients. The posterior variance is smaller if the gradients are incorporated; (2) the utility of sampling each point under the value of information criteria of KG ($\qKGg$) and EI ($\qEIg$) in both settings. 
If no derivatives are observed, both KG and EI will query a point with high potential gain (i.e.\ a small expected function value). 
On the other hand, when gradients are observed, $\qKGg$ makes a considerably better sampling decision, whereas $\qEIg$ samples essentially the same location as $\qEI$.
The plots in the bottom row depict the posterior surface \emph{after} the respective sample.
Interestingly, KG benefits more from observing the gradients than EI (the last two plots): $\qKGg$'s observation yields accurate knowledge of the optimum's location, while $\qEIg$'s observation leaves substantial uncertainty.}
\vspace{-2mm}
\label{Fig_tutorial}
\vspace{-2mm}
\end{figure*}

Including all $d$ partial derivatives can be computationally prohibitive since GP inference scales as $\mathcal{O}(n^3(d+1)^3)$.  To overcome this challenge while retaining the value of derivative observations, we can include only one directional derivative from each iteration in our inference. $\qKGg$ can naturally decide which derivative to include, and can adjust our choice of where to best sample given that we observe more limited information. We define the $\qKGg$ acquisition function for observing only the function value and the derivative with direction $\theta$ at $\z{1:q}$ as
\begin{eqnarray}
\qKGg (\z{1:q}, \theta) = \min_{x \in \mathbb{A}} \tilde{\mu}_1^{(n)}(x){}-\mathbb{E}_n\left[\min_{x \in \mathbb{A}} \tilde{\mu}_1^{(n+q)}(x) \Bigm| \x{(n+1):(n+q)} = \z{1:q}; \theta \right].
\label{eqn:multiKG-i}
\end{eqnarray}
where conditioning on $\theta$ is here understood to mean that $\tilde{\mu}_1^{(n+q)}(x)$ is the conditional mean of $f(x)$ given $y(\z{1:q})$ and $\theta^T \nabla y(\z{1:q}) = (\theta^T \nabla y(\z{i}) : i=1,\ldots,q)$.
The full algorithm is as follows.
\begin{algorithm}[H]
\caption{$\qKGg$ with Relevant Directional Derivative Detection}\label{algo1}
\begin{algorithmic}[1]
\FOR{$t=1$ to $N$}
\STATE $({\z{1:q}}^*, \theta^*) = \argmax_{\z{1:q}, \theta} \qKGg (\z{1:q}, \theta)$
\STATE Augment data with $y({\z{1:q}}^*)$ and $\theta^{*T}\nabla y({\z{1:q}}^*)$. Update our posterior on $\left(f(x), \nabla f(x)\right)$.
\ENDFOR\\
Return $x^* = \argmin_{x \in \mathbb{A}} \tilde\mu_1^{Nq}(x)$
\end{algorithmic}
\label{alg: dkgg}
\end{algorithm}

\subsection{Efficient Exact Computation of $\qKGg$}
\label{sect:computation}
Calculating and maximizing $\qKGg$ is difficult when $\mathbb{A}$ is continuous because the term 
$\min_{x \in \mathbb{A}} \tilde{\mu}_1^{(n+q)}(x)$ in Eq.~\eqn{multiKG-i} requires optimizing over a continuous domain, and then we must integrate this optimal value through its dependence on $y(\z{1:q})$ and $\theta^T \nabla y(\z{1:q})$.  Previous work on the knowledge gradient in continuous domains \citep{scott2011correlated,wu2016parallel,poloczek2016multi} approaches this computation by taking minima within expectations not over the full domain $\mathbb{A}$ but over a discretized finite approximation.  This approach supports analytic integration in \citet{scott2011correlated} and \citet{poloczek2016multi}, and a sampling-based scheme in \citet{wu2016parallel}.  However, the discretization in this approach introduces error and scales poorly with the dimension of~$\mathbb{A}$.

Here we propose a novel method for calculating an unbiased estimator of the gradient of $\qKGg$ which we then use within stochastic gradient ascent to maximize $\qKGg$.  This method avoids discretization, and thus is exact.  It also improves speed significantly over a discretization-based scheme.

In \sectn{discrete} of the supplement we show that the $\qKGg$ factor can be expressed as
\begin{eqnarray}
\qKGg (\z{1:q}, \theta) = \mathbb{E}_{n} \left[\min_{x \in \mathbb{A}} \hat{\mu}_1^{(n)}(x) - \min_{x \in \mathbb{A}} \left(\hat{\mu}_1^{(n)}(x) + \hat{\sigma}^{(n)}_1(x, \theta, \z{1:q}) W  \right) \right],
\label{eqn:pre-posterior}
\end{eqnarray} 
where $\hat{\mu}^{(n)}$ is the mean function of $(f(x), \theta^T \nabla f(x))$ after $n$ evaluations, $W$ is a $2q$ dimensional standard normal random column vector and $\hat{\sigma}^{(n)}_1(x, \theta, \z{1:q})$ is the first row of a $2 \times 2q$ dimensional matrix, which is related to the kernel function of $(f(x), \theta^T \nabla f(x))$ after $n$ evaluations with an exact form specified in \eqn{sigma_n} of the supplement. 

Under sufficient regularity conditions \cite{lecuyer1995}, 
one can interchange the gradient and expectation operators,
\begin{eqnarray*}
\nabla \qKGg (\z{1:q}, \theta) = -\mathbb{E}_{n} \left[\nabla \min_{x \in \mathbb{A}} \left(\hat{\mu}_1^{(n)}(x) + \hat{\sigma}^{(n)}_1(x, \theta, \z{1:q}) W  \right) \right],
\end{eqnarray*}
where here the gradient is with respect to $z^{(1:q)}$ and $\theta$.
If $(x,\z{1:q},\theta) \mapsto \left(\hat{\mu}_1^{(n)}(x) + \hat{\sigma}^{(n)}_1(x, \theta, \z{1:q})W  \right)$ is continuously differentiable and $\mathbb{A}$ is compact, the envelope theorem \citep{milgrom2002envelope} implies
\begin{eqnarray}
	\nabla \qKGg (\z{1:q}, \theta) = -\mathbb{E}_{n} \left[\nabla \left(\hat{\mu}_1^{(n)}(x^*(W)) + \hat{\sigma}^{(n)}_1(x^*(W), \theta, \z{1:q}) W  \right) \right],
\label{eqn:envelope}
\end{eqnarray}
where $x^{*}(W) \in \arg\min_{x \in \mathbb{A}} \left(\hat{\mu}_1^{(n)}(x) + \hat{\sigma}^{(n)}_1(x, \theta, \z{1:q})W  \right)$. To find $x^{*}(W)$, one can utilize a multi-start gradient descent method since the gradient is analytically available for the objective $\hat{\mu}_1^{(n)}(x) + \hat{\sigma}^{(n)}_1(x, \theta, \z{1:q}) W$. Practically, we find that the learning rate of $l^{\text{inner}}_t = 0.03/t^{0.7}$ is robust for finding $x^{*}(W)$.

The expression \eqref{eqn:envelope} implies that $\nabla \left(\hat{\mu}_1^{(n)}(x^*(W)) + \hat{\sigma}^{(n)}_1(x^*(W), \theta, \z{1:q})W  \right)$ is an unbiased estimator of $\nabla \qKGg (\z{1:q}, \theta, \mathbb{A})$, when the regularity conditions it assumes hold.  We can use this unbiased gradient estimator within stochastic gradient ascent \citep{harold1997stochastic}, optionally with multiple starts, 
to solve the outer optimization problem $\argmax_{\z{1:q}, \theta} \qKGg (\z{1:q}, \theta)$
and can use a similar approach when observing full gradients to solve \eqref{Eq_outer_opt}. For the outer optimization problem, we find that the learning rate of $l^{\text{outer}}_t = 10 l^{\text{inner}}_t$ performs well over all the benchmarks we tested.

\paragraph*{Bayesian Treatment of Hyperparameters.}
\label{sect:bayesian}
We adopt a fully Bayesian treatment of hyperparameters similar to \citet{snoek2012practical}. We draw $M$ samples of hyperparameters $\phi^{(i)}$ for $1 \le i \le M$ via the \texttt{emcee} package~\citep{foreman2013emcee} and average our acquisition function across them to obtain
\begin{eqnarray}
\label{eqn:approx}
\qKGg_{\text{Integrated}}(\z{1:q}, \theta) &=&\frac{1}{M} \sum_{i=1}^{M} \qKGg(\z{1:q}, \theta; \phi^{(i)}),
\end{eqnarray}
where the additional argument $\phi^{(i)}$ in $\qKGg$ indicates that the computation is performed conditioning on hyperparameters $\phi^{(i)}$.
In our experiments, we found this method to be computationally efficient and robust, although a more principled treatment of unknown hyperparameters within the knowledge gradient framework would instead marginalize over them when computing $\tilde{\mu}^{(n+q)}(x)$ and $\tilde{\mu}^{(n)}$.

\subsection{Theoretical Analysis}
\label{sect:theory}
Here we present three theoretical results giving insight into the properties of $\qKGg$, with proofs in the supplementary material.
For the sake of simplicity, we suppose all partial derivatives are provided to~$\qKGg$. Similar results hold for $\qKGg$ with relevant directional derivative detection.
We begin by stating that the value of information (VOI) obtained by $\qKGg$ exceeds the VOI that can be achieved in the derivative-free setting.
\begin{proposition}
\label{pro:larger-voi}
Given identical posteriors $\tilde \mu^{(n)}$,
\begin{eqnarray*}
\qKGg (\z{1:q}) \ge \qKG (\z{1:q}),
\end{eqnarray*}
where $\qKG$ is the batch knowledge gradient acquisition function without gradients proposed by \citet{wu2016parallel}. This inequality is strict under mild conditions (see Sect.~\sect{bayes} in the supplement).
\end{proposition}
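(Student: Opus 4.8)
The plan is to show that $\qKG(\z{1:q})$ is a special case of a VOI computation that $\qKGg(\z{1:q})$ strictly dominates, by viewing both through the lens of the conditional variance reduction they induce on the posterior mean. Both acquisition functions have the form $\min_x \tilde\mu_1^{(n)}(x) - \E_n[\min_x \tilde\mu_1^{(n+q)}(x)]$, and the common first term cancels, so it suffices to compare the two quantities $\E_n[\min_x \mu_1^{\text{KG},(n+q)}(x)]$ and $\E_n[\min_x \tilde\mu_1^{(n+q)}(x)]$, where the former is the updated posterior mean conditioning only on $y(\z{1:q})$ and the latter conditions additionally on $\nabla y(\z{1:q})$. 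The key observation is that conditioning on strictly more information (the gradient observations $\nabla y(\z{1:q})$ in addition to $y(\z{1:q})$) yields a posterior mean $\tilde\mu_1^{(n+q)}(x)$ that, as a random function of the time-$n$ observations, is a mean-preserving spread of $\mu_1^{\text{KG},(n+q)}(x)$ — indeed, by the tower property, $\E_n[\tilde\mu_1^{(n+q)}(x) \mid y(\z{1:q})] = \mu_1^{\text{KG},(n+q)}(x)$.

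First I would make precise the representation of each updated posterior mean as a Gaussian random field indexed by $x$, using Eq.~(\ref{eqn:pre-posterior})-style additive-noise form: $\tilde\mu_1^{(n+q)}(x) = \tilde\mu_1^{(n)}(x) + \tilde\sigma^{(n)}(x, \z{1:q}) \tilde{W}$ where $\tilde{W}$ is standard normal of dimension $(d+1)q$ (gradient case) and $\mu_1^{\text{KG},(n+q)}(x) = \tilde\mu_1^{(n)}(x) + \sigma^{(n)}_{\text{KG}}(x, \z{1:q}) W_{\text{KG}}$ with $W_{\text{KG}}$ of dimension $q$. Then I would establish the conditional-expectation identity: since the function-value observations $y(\z{1:q})$ are a sub-$\sigma$-algebra of the full observations $(y,\nabla y)(\z{1:q})$, the GP-update formulas give $\E_n[\tilde\mu_1^{(n+q)}(x) \mid y(\z{1:q})] = \mu_1^{\text{KG},(n+q)}(x)$ for every $x$. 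Second, I would apply Jensen's inequality in the conditional form: for the concave function $g \mapsto \min_{x\in\A} g(x)$ on functions,
\begin{eqnarray*}
\E_n\left[\min_{x\in\A} \tilde\mu_1^{(n+q)}(x) \,\Big|\, y(\z{1:q})\right] \le \min_{x\in\A} \E_n\left[\tilde\mu_1^{(n+q)}(x) \,\Big|\, y(\z{1:q})\right] = \min_{x\in\A}\mu_1^{\text{KG},(n+q)}(x).
\end{eqnarray*}
Taking $\E_n[\cdot]$ of both sides and using the tower property yields $\E_n[\min_x \tilde\mu_1^{(n+q)}(x)] \le \E_n[\min_x \mu_1^{\text{KG},(n+q)}(x)]$, which after negation and adding the common term $\min_x\tilde\mu_1^{(n)}(x)$ is exactly $\qKGg(\z{1:q}) \ge \qKG(\z{1:q})$.

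For the strictness claim, I would argue that the Jensen inequality above is strict unless $\min_{x\in\A}\tilde\mu_1^{(n+q)}(x)$ is (conditionally on $y(\z{1:q})$) almost surely equal to its conditional expectation — i.e., unless the extra randomness injected by $\nabla y(\z{1:q})$ has no effect on the location/value of the argmin. I would identify a mild nondegeneracy condition (the kernel's Hessian block $H$ contributing genuinely new information, so $\tilde\sigma^{(n)}$ has rank strictly exceeding that of $\sigma^{(n)}_{\text{KG}}$, together with the minimizer not being pinned down deterministically) under which the conditional distribution of the minimizing value is non-degenerate, and invoke strict concavity of the min over a set where at least two near-optimal $x$ swap order with positive probability. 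I expect the main obstacle to be this strictness argument rather than the inequality itself: one must rule out pathological configurations where the gradient observations, though informative about $f$ pointwise, happen not to change $\min_x\tilde\mu_1^{(n+q)}(x)$ almost surely — this requires a careful measure-theoretic argument about the support of the Gaussian field $x \mapsto \tilde\sigma^{(n)}(x,\z{1:q})\tilde{W}$ restricted near the posterior minimizer, and is presumably where the deferred "mild conditions" in Sect.~\sect{bayes} of the supplement do their work.
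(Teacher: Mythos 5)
Your proof is correct and follows essentially the same route as the paper's: both identify the derivative-free posterior mean as the conditional expectation of the derivative-enabled one given only $y(\z{1:q})$ (tower property), and then apply Jensen's inequality to the concave functional $g \mapsto \min_{x\in\mathbb{A}} g(x)$. Your strictness discussion also matches the paper's in spirit (equality forces the minimum to be insensitive to $\nabla y(\z{1:q})$ on an almost-sure set), so no further comparison is needed.
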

Next, we show that $\qKGg$ is one-step Bayes-optimal by construction.
\begin{proposition}
\label{pro:one-step-optimal}
If only one iteration is left and we can observe both function values and partial derivatives, then $\qKGg$ is Bayes-optimal among all feasible policies.
\end{proposition}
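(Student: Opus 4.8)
The plan is to argue directly from the definition of Bayes-optimality for a one-step decision problem. Fix the situation in which exactly one iteration remains, so we must choose a batch $\z{1:q}$, observe $\left(y(\z{1:q}), \nabla y(\z{1:q})\right)$, and then make an irrevocable risk-neutral decision, incurring loss equal to the value of $f$ at the reported point. The key observation is that, given any data set, the optimal terminal action is to report $\argmin_{x \in \mathbb{A}} \tilde\mu_1^{(n+q)}(x)$, since conditional expected loss of reporting $x$ is exactly $\tilde\mu_1^{(n+q)}(x)$ (the posterior mean of $f(x)$), and this is minimized over $\mathbb{A}$ by construction. Hence the conditional expected terminal loss of a batch policy that queries $\z{1:q}$ equals $\mathbb{E}_n\!\left[\min_{x \in \mathbb{A}} \tilde\mu_1^{(n+q)}(x) \,\middle|\, \x{(n+1):(n+q)} = \z{1:q}\right]$.

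Next I would note that the current (pre-query) minimum achievable expected loss is the constant $\min_{x \in \mathbb{A}} \tilde\mu_1^{(n)}(x)$, which does not depend on which batch we choose. Therefore, minimizing the expected terminal loss over all feasible one-step policies is equivalent to maximizing the difference
\[
\min_{x \in \mathbb{A}} \tilde\mu_1^{(n)}(x) - \mathbb{E}_n\!\left[\min_{x \in \mathbb{A}} \tilde\mu_1^{(n+q)}(x) \,\middle|\, \x{(n+1):(n+q)} = \z{1:q}\right],
\]
which is precisely $\qKGg(\z{1:q})$ as defined in Eq.~\eqn{multiKG}. Since Algorithm~\ref{alg: dkgg} (in its last iteration) selects $\z{1:q}$ maximizing exactly this quantity and then reports the posterior-mean minimizer, it coincides with the Bayes-optimal policy. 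One should also verify the standard measurability/integrability technicalities — that the terminal decision can be taken to be a measurable function of the observations (existence of the minimizer over compact $\mathbb{A}$ for the continuous posterior mean), and that the relevant expectations are finite — so that interchanging "optimize the terminal action inside the expectation" with "optimize the policy" is legitimate; this is a routine dynamic-programming / tower-property argument.

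The main obstacle, such as it is, is not deep but bookkeeping: one must be careful about what "all feasible policies" means — a policy here may randomize its batch choice and may choose the terminal report as an arbitrary measurable function of the query location and the realized observations — and argue that neither form of flexibility helps. Randomizing the batch cannot beat the best deterministic batch because the objective is an expectation (linear in the policy's mixing distribution), and the freedom in the terminal report is already optimized away by the posterior-mean argument above. Establishing these reductions cleanly, together with the existence of the argmin defining the terminal action, is the only point requiring care; the rest is immediate from the construction of $\qKGg$.
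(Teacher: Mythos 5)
Your proposal is correct and is essentially the paper's argument: both reduce the claim to the observation that the optimal terminal report is $\argmin_{x\in\mathbb{A}}\tilde\mu_1^{(n+q)}(x)$, so the expected terminal loss of querying $\z{1:q}$ is $\mathbb{E}_n\left[\min_{x\in\mathbb{A}}\tilde\mu_1^{(n+q)}(x)\right]$, and subtracting the batch-independent constant $\min_{x\in\mathbb{A}}\tilde\mu_1^{(n)}(x)$ turns minimizing this loss into maximizing $\qKGg$. The paper merely packages the same step inside a dynamic-programming value-function/Bellman-equation formulation (which it reuses later for the consistency theorem), specializing to the last iteration where $V^{n+1}(S^{n+1})=\min_{x\in\mathbb{A}}\tilde\mu_1^{((n+1)q)}(x)$.
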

As a complement to the one-step optimality, we show that $\qKGg$ is asymptotically consistent if the feasible set $\mathbb{A}$ is \emph{finite}. Asymptotic consistency means that $\qKGg$ will choose the correct solution when the number of samples goes to infinity.
\begin{theorem}
\label{thr:optmal}
If the function $f(x)$ is sampled from a GP with known hyperparameters, the $\qKGg$ algorithm is asymptotically consistent, i.e. 
\begin{eqnarray*}
\lim_{N \to \infty} f(x^{*}(\qKGg, N)) = \min_{x \in \mathbb{A}} f(x)
\end{eqnarray*}
almost surely, where $x^{*}(\qKGg, N)$ is the point recommended by $\qKGg$ after $N$ iterations.
\end{theorem}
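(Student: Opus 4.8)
The plan is to adapt the consistency argument for knowledge-gradient policies on finite domains \citep[cf.][]{frazier2009knowledge,wu2016parallel} to the multi-output (function value plus gradient) GP. Write $\mathcal{F}_n$ for the filtration generated by the first $n$ evaluations, $V^{(n)} := \min_{x \in \mathbb{A}} \tilde{\mu}_1^{(n)}(x)$, and let $\z{1:q}_t$ be the batch queried at iteration $t$, so that $x^*(\qKGg,N) = \argmin_{x \in \mathbb{A}} \tilde{\mu}_1^{(Nq)}(x)$. First I would establish convergence of the posterior. For fixed $x$, $\tilde{\mu}_1^{(n)}(x) = \mathbb{E}[f(x) \mid \mathcal{F}_n]$ is a martingale bounded in $L^2$ (the posterior variance is non-increasing and at most $K(x,x)$), hence converges almost surely to $\tilde{\mu}_1^{(\infty)}(x) = \mathbb{E}[f(x)\mid\mathcal{F}_\infty]$; finiteness of $\mathbb{A}$ makes this simultaneous, so $V^{(n)} \to V^{(\infty)} := \min_x \tilde{\mu}_1^{(\infty)}(x)$. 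Moreover $\mathrm{Var}(f(x)\mid\mathcal{F}_n) \downarrow \mathrm{Var}(f(x)\mid\mathcal{F}_\infty)$, and this limit vanishes iff $f(x) = \tilde{\mu}_1^{(\infty)}(x)$ a.s.

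Next I would show the realized value of information vanishes. Since $\tilde{\mu}_1^{(\cdot)}(x)$ is a martingale and the minimum is concave, $(V^{(n)})$ is a supermartingale; it is bounded in $L^1$, so $\sum_t \mathbb{E}[V^{(tq)} - V^{((t+1)q)}] < \infty$, and by the definition of the $\qKGg$ factor in \eqn{multiKG} one has $\mathbb{E}[V^{(tq)} - V^{((t+1)q)} \mid \mathcal{F}_{tq}] = \qKGg(\z{1:q}_{t+1})$; hence $\qKGg(\z{1:q}_t) \to 0$ a.s. Because acquiring more information can only lower $\mathbb{E}_n[\min_y \tilde{\mu}_1(y)]$ (Jensen again), the batch $(x,\dots,x)$ dominates a single sample at $x$, and since the queried batch maximizes $\qKGg$ we get $\qKGg(\z{1:q}_{t+1}) \ge \qKGg^{(1)}_{tq}(x)$, where $\qKGg^{(1)}_n(x)$ denotes the $\qKGg$ factor of one additional evaluation (function value and full gradient) at $x$ under the time-$n$ posterior. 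Therefore $\qKGg^{(1)}_n(x) \to 0$ a.s.\ for every $x \in \mathbb{A}$.

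Finally I would convert vanishing information into correct identification. With $\hat{x}_n \in \argmin_y \tilde{\mu}_1^{(n)}(y)$ and $\Delta_n(x) := \tilde{\mu}_1^{(n)}(x) - V^{(n)} \ge 0$ (for $x \neq \hat{x}_n$), bounding $\min_y \tilde{\mu}_1^{(n+1)}(y)$ below by $\min(\tilde{\mu}_1^{(n+1)}(\hat{x}_n), \tilde{\mu}_1^{(n+1)}(x))$ and using $\mathbb{E}_n[\tilde{\mu}_1^{(n+1)}(\hat{x}_n)] = V^{(n)}$ gives
\begin{equation*}
\qKGg^{(1)}_n(x) \ \ge\ \mathbb{E}_n\big[(\tilde{\mu}_1^{(n+1)}(\hat{x}_n) - \tilde{\mu}_1^{(n+1)}(x))^+\big] \ =\ \tilde{s}_n(x)\, g\!\big(\Delta_n(x)/\tilde{s}_n(x)\big),
\end{equation*}
with $g(s) = \varphi(s) - s\Phi(-s) > 0$ strictly decreasing, $g(0) = 1/\sqrt{2\pi}$, and $\tilde{s}_n(x)^2$ the reduction of the posterior variance of $f(\hat{x}_n) - f(x)$ produced by one evaluation at $x$. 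One then shows that, after controlling the incumbent term, $\qKGg^{(1)}_n(x) \to 0$ together with boundedness of $\Delta_n(x)$ forces $\mathrm{Var}_n(f(x)) \to 0$: if not along some subsequence, $\tilde{s}_n(x)$ would be bounded below and $\Delta_n(x)/\tilde{s}_n(x)$ bounded, so $\qKGg^{(1)}_n(x)$ would be bounded below, contradicting the previous step. Hence $\tilde{\mu}_1^{(\infty)}(x) = f(x)$ for all $x$, $V^{(n)} \to \min_x f(x)$, every non-minimizer eventually has posterior mean strictly above $V^{(n)}$, and so $x^*(\qKGg,N)$ is eventually a true minimizer, yielding $f(x^*(\qKGg,N)) \to \min_{x \in \mathbb{A}} f(x)$ a.s.

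The main obstacle is the implication ``$\qKGg^{(1)}_n(x) \to 0 \Rightarrow \mathrm{Var}_n(f(x)) \to 0$'' in the last step. Unlike the independent-belief case, not sampling $x$ can still drive $\mathrm{Var}_n(f(x))$ down through correlations, and $\tilde{s}_n(x)$ can be small even when $\mathrm{Var}_n(f(x))$ is not --- for instance when $x$ equals the current incumbent, or is strongly posterior-correlated with it. Handling these cases (comparing against the runner-up point when $x = \hat{x}_n$, and obtaining a uniform lower bound on the relevant variance reduction in the multi-output kernel) is the delicate part; the remaining steps are routine martingale and Gaussian bookkeeping.
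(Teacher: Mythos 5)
Your route is genuinely different from the paper's. The paper sets up a dynamic-programming value function, proves monotonicity and boundedness lemmas (Lemmas \ref{lem:extra-measure}--\ref{lem:value-function-bound}), shows via a uniform-integrability argument that any stationary policy sampling every alternative infinitely often attains the lower bound $U(s)=\E\left[\min_{x}f(x)\mid S^0=s\right]$ (Lemma \ref{lem:infinitely-often}), and then argues by contradiction at the \emph{limiting state} $S^{\infty}$: if some $x$ were sampled only finitely often, every batch drawn from the infinitely-sampled set would have zero value of information at $S^{\infty}$, while a batch containing $x$ would have strictly positive value, so $\qKGg$ would sample $x$ after all. You instead run a supermartingale argument along the sample path to get $\qKGg(\z{1:q}_t)\to 0$, dominate the one-point acquisition value via Jensen, and then try to convert ``VOI at $x$ tends to $0$'' into ``posterior variance at $x$ tends to $0$.'' Your first two steps are sound, and the reduction from batches to single points is a nice touch the paper does not need.

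The gap is exactly where you locate it, and it is not cosmetic: the bound $\qKGg^{(1)}_n(x)\ge \tilde s_n(x)\,g\!\left(\Delta_n(x)/\tilde s_n(x)\right)$ degenerates whenever the one-step change in $\tilde\mu_1^{(n+1)}(\hat x_n)-\tilde\mu_1^{(n+1)}(x)$ has small variance, which with correlated beliefs can happen while $\mathrm{Var}_n(f(x))$ stays bounded away from zero (for instance $x=\hat x_n$, or $f(x)$ and $f(\hat x_n)$ nearly perfectly correlated under the time-$n$ posterior), so the implication ``$\qKGg^{(1)}_n(x)\to 0\Rightarrow \mathrm{Var}_n(f(x))\to 0$'' does not follow from the bound as stated. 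The paper sidesteps precisely this quantitative difficulty by passing to the limit first: it invokes convergence of $S^n$ to $S^{\infty}$ (as in Lemma A.5 of \citet{frazier2009knowledge}), at which point every infinitely-sampled alternative has zero residual variance, the comparison batch can be taken inside the fully-resolved set $E^c$ rather than at the incumbent (so the cancellation you worry about cannot occur), and only the qualitative statement ``residual uncertainty at $x$ implies strictly positive VOI at $S^{\infty}$'' is needed --- no uniform-in-$n$ lower bound. To complete your argument you would either need to adopt that device, or supply the missing uniform control; and in either case you would still need something like the uniform-integrability step of Lemma \ref{lem:infinitely-often} to pass from convergence of the posterior means to the almost-sure statement about $f(x^{*}(\qKGg,N))$ that the theorem actually asserts.
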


\section{Experiments}
\label{sect:section_exp}
We evaluate the performance of the proposed algorithm $\qKGg$ with relevant directional derivative detection (Algorithm 1) on six standard synthetic benchmarks (see Fig.~\ref{Fig_syn_bench}).
Moreover, we examine its ability to tune the hyperparameters for the weighted k-nearest neighbor metric, logistic regression, deep learning, and for a spectral mixture kernel (see Fig.~\ref{Fig_real_bench}). 

We provide an easy-to-use \texttt{Python} package with the core written in \texttt{C++}, available at \url{https://github.com/wujian16/Cornell-MOE}.

We compare  $\qKGg$ to several state-of-the-art methods: 
(1) The batch expected improvement method ($\qEI$) of~\citet{wang2015parallel} that does not utilize derivative information and an extension of $\qEI$ that incorporates derivative information denoted $\qEIg$. $\qEIg$ is similar to \citet{lizotte2008practical} but handles incomplete gradients and supports batches.
(2) The batch GP-UCB-PE method of~\citet{contal2013parallel} that does not utilize derivative information, and an extension that does.
(3) The batch knowledge gradient algorithm without derivative information ($\qKG$) of~\citet{wu2016parallel}.
Moreover, we generalize the method of \citet{osborne2009gaussian} to batches and evaluate it on the KNN benchmark.
All of the above algorithms allow incomplete gradient observations.
In benchmarks that provide the full gradient, we additionally compare to the gradient-based method L-BFGS-B provided in~\texttt{scipy}.
We suppose that the objective function~$f$ is drawn from a Gaussian process~$GP(\mu,\Sigma)$, where~$\mu$ is a constant mean function and~$\Sigma$ is the squared exponential kernel. We sample $M=10$ sets of hyperparameters by the \texttt{emcee} package \citep{foreman2013emcee}.

Recall that the immediate regret is defined as the loss with respect to a global optimum.
The plots for synthetic benchmark functions, shown in Fig.~\ref{Fig_syn_bench}, report the log10 of immediate regret of the solution that each algorithm would pick as a function of the number of function evaluations.  
Plots for other experiments report the objective value of the solution instead of the immediate regret.
Error bars give the mean value plus and minus one standard deviation. The number of replications is stated in each benchmark's description.

\subsection{Synthetic Test Functions}
We evaluate all methods on six test functions chosen from~\citet{test2016}. 
To demonstrate the ability to benefit from \emph{noisy derivative information}, we sample additive normally distributed noise with zero mean and standard deviation~$\sigma = 0.5$ for both the objective function and its partial derivatives.
$\sigma$ is unknown to the algorithms and must be estimated from observations.
We also investigate how incomplete gradient observations affect algorithm performance.
We also experiment with two different batch sizes: we use a batch size~$q=4$ for the Branin, Rosenbrock, and Ackley functions; otherwise, we use a batch size~$q=8$. 
Fig.~\ref{Fig_syn_bench} summarizes the experimental results.

\paragraph*{Functions with Full Gradient Information.}
For 2d Branin on domain $[-5, 15]\times[0, 15]$, 5d Ackley on $[-2, 2]^5$, and 6d Hartmann function on $[0, 1]^6$, we assume that the full gradient is available.

Looking at the results for the Branin function in Fig.~\ref{Fig_syn_bench}, $\qKGg$ outperforms its competitors after 40 function evaluations and obtains the best solution overall (within the limit of function evaluations). 
BFGS makes faster progress than the Bayesian optimization methods during the first 20 evaluations, but subsequently stalls and fails to obtain a competitive solution.
On the Ackley function $\qEIg$ makes fast progress during the first~$50$ evaluations but also fails to make subsequent progress. Conversely, $\qKGg$ requires about~$50$ evaluations to improve on the performance of~$\qEIg$, after which $\qKGg$ achieves the best overall performance again.
For the Hartmann function $\qKGg$ clearly dominates its competitors over all function evaluations.

\paragraph*{Functions with Incomplete Derivative Information.}
For the 3d Rosenbrock function on $[-2, 2]^3$ we only provide a noisy observation of the third partial derivative. 
Both $\qEI$ and $\qEIg$ get stuck early. $\qKGg$ on the other hand finds a near-optimal solution after $\mathtt{\sim}50$ function evaluations; $\qKG$, without derivatives, catches up after $\mathtt{\sim}75$ evaluations and performs comparably afterwards. 
The 4d Levy benchmark on $[-10, 10]^4$, where the fourth partial derivative is observable with noise, shows a different ordering of the algorithms: $\qEI$ has the best performance, beating even its formulation that uses derivative information. One explanation could be that the smoothness and regular shape of the function surface benefits this acquisition criteria. 
For the 8d Cosine mixture function on $[-1, 1]^8$ we provide two noisy partial derivatives. $\qKGg$ and UCB with derivatives perform better than EI-type criterion, and achieve the best performances, with~$\qKGg$ beating~UCB with derivatives slightly.

In general, we see that~$\qKGg$ successfully exploits noisy derivative information and has the best overall performance.
\begin{figure*}
\centering
  \subfigure
  \centering
  \includegraphics[width=116pt, height = 112pt]{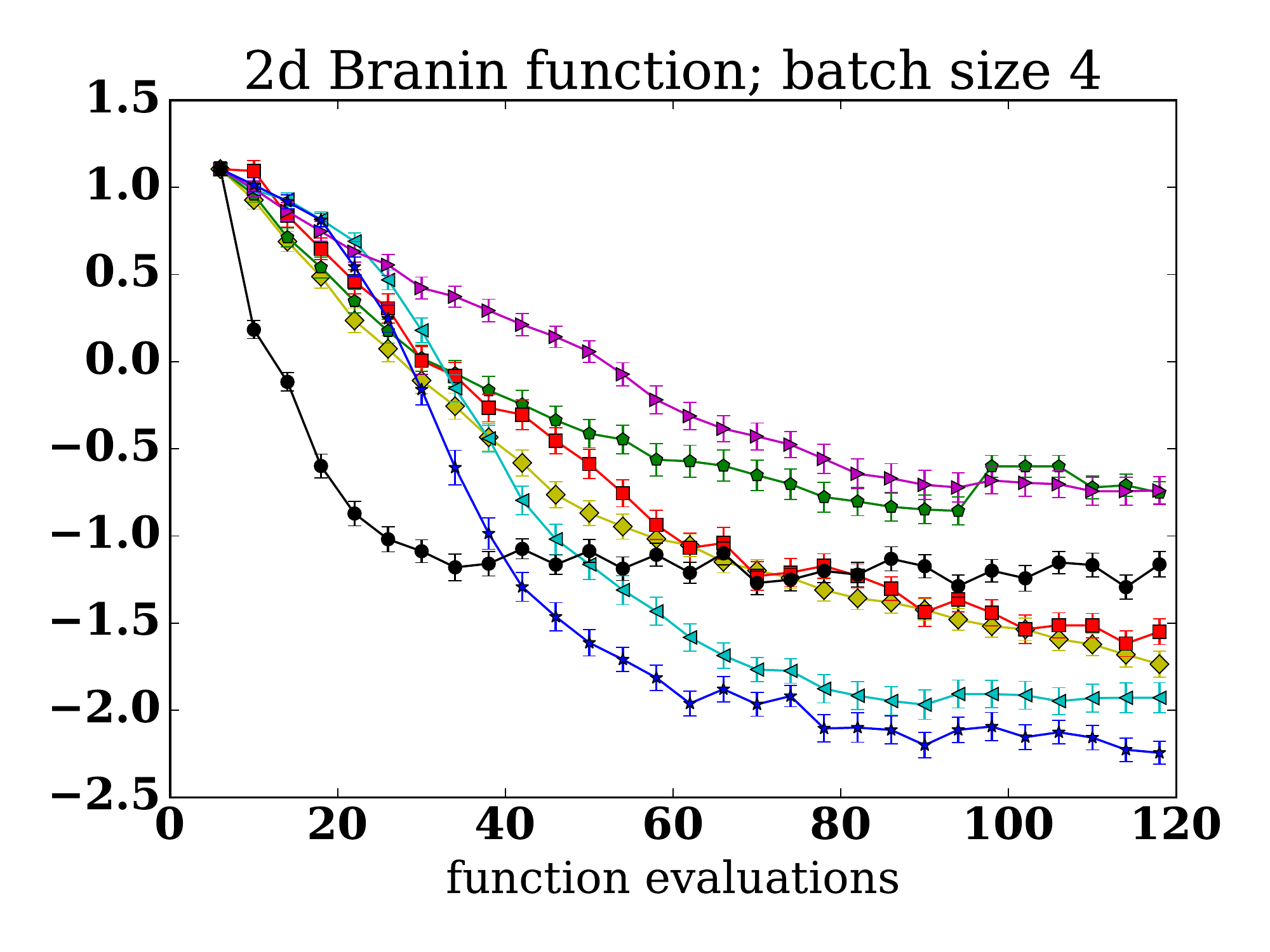}
  \subfigure
  \centering
  \includegraphics[width=116pt, height = 112pt]{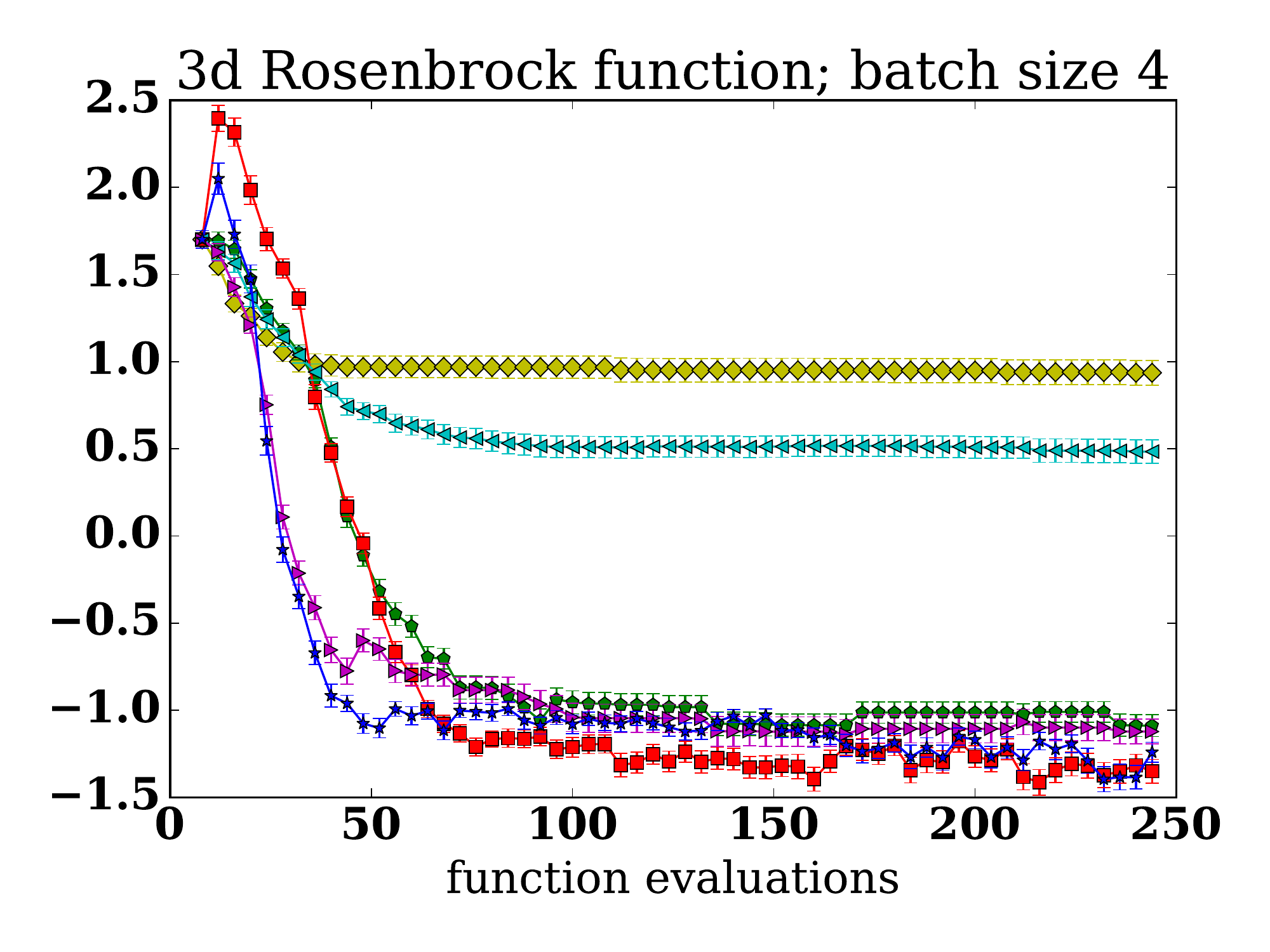}
  \subfigure
  \centering
  \includegraphics[width=116pt, height = 112pt]{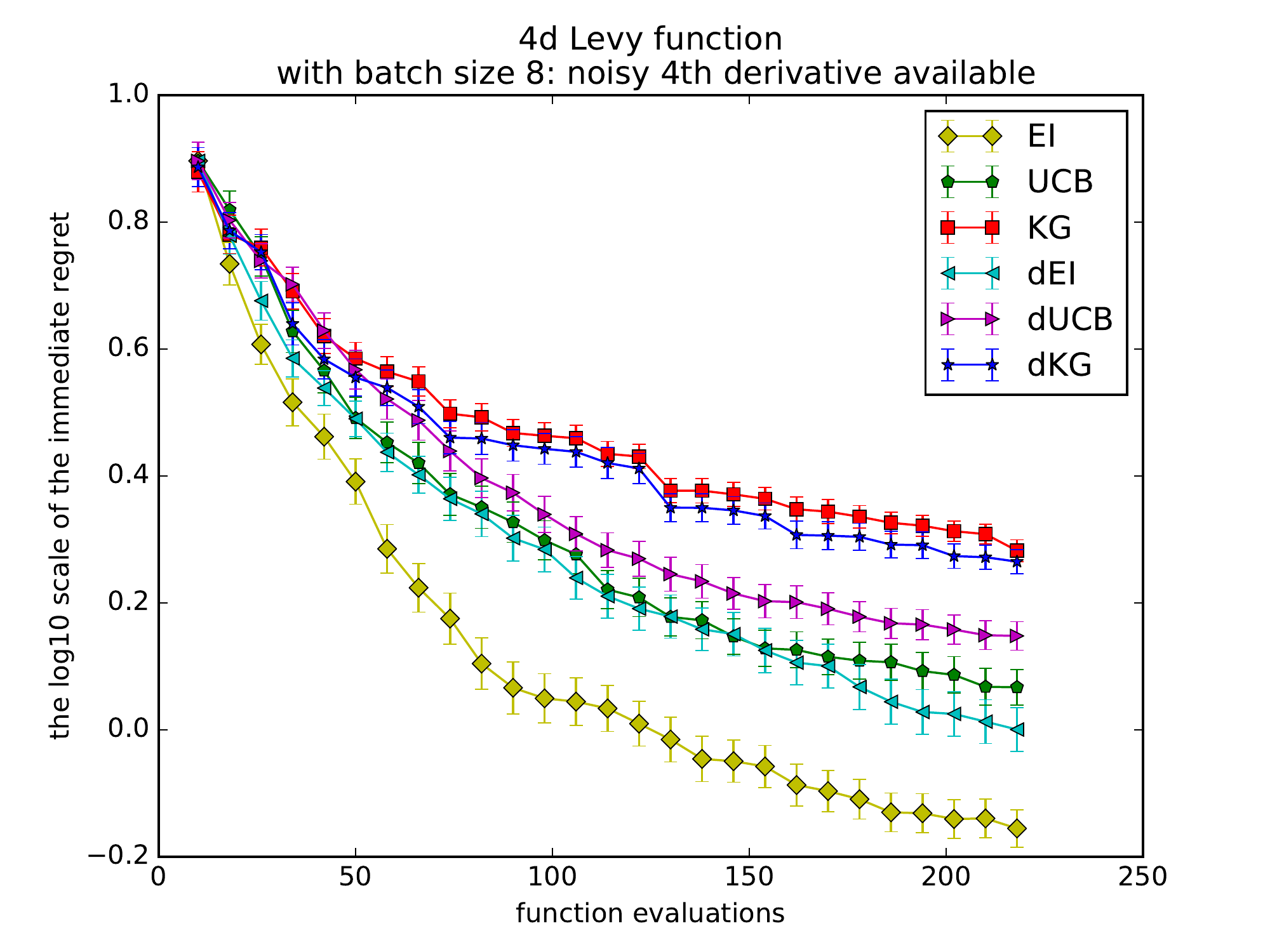}
   \subfigure
  \centering
\includegraphics[width=116pt, height = 112pt]{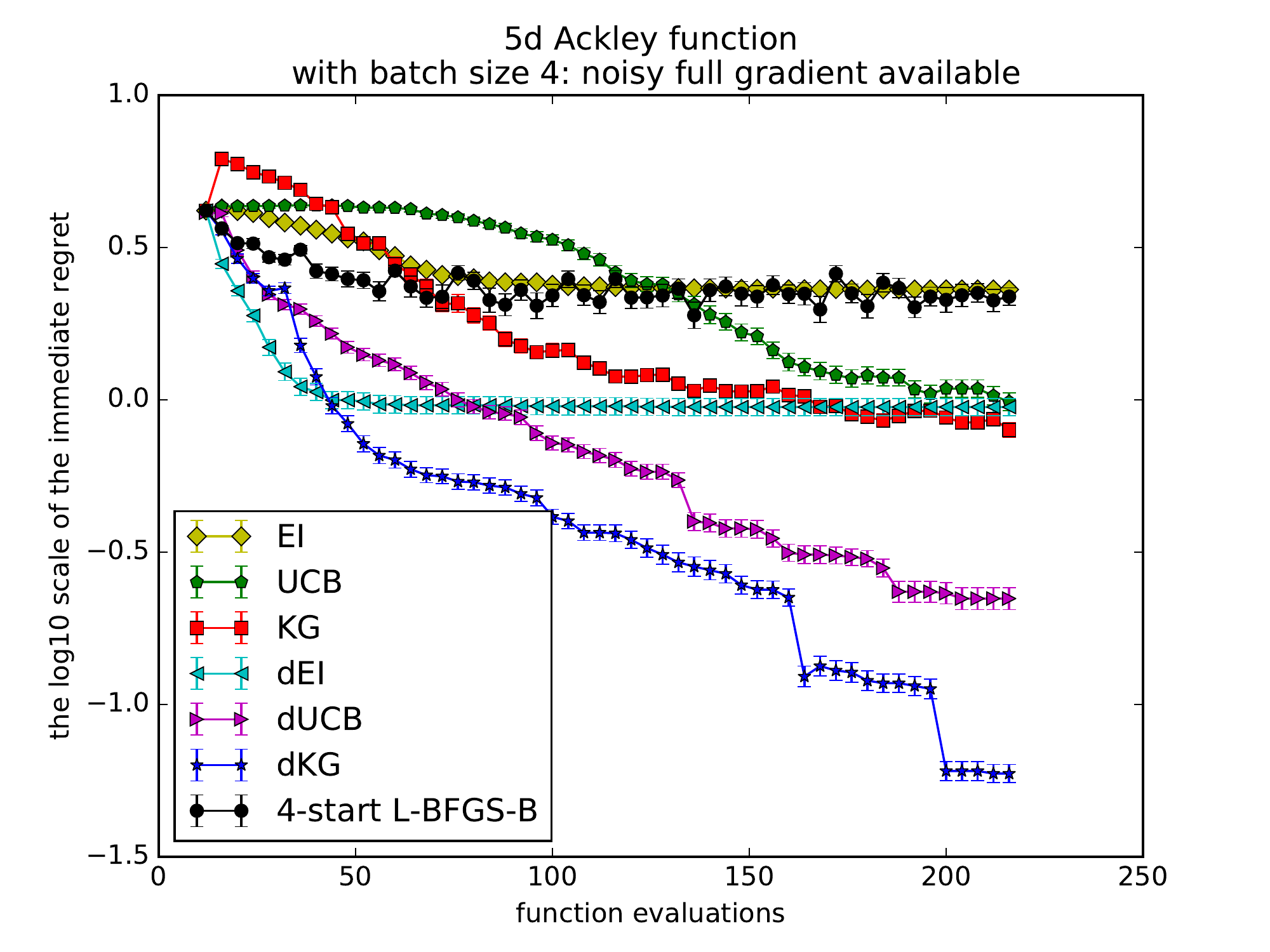}
  \subfigure
  \centering
\includegraphics[width=116pt, height = 112pt]{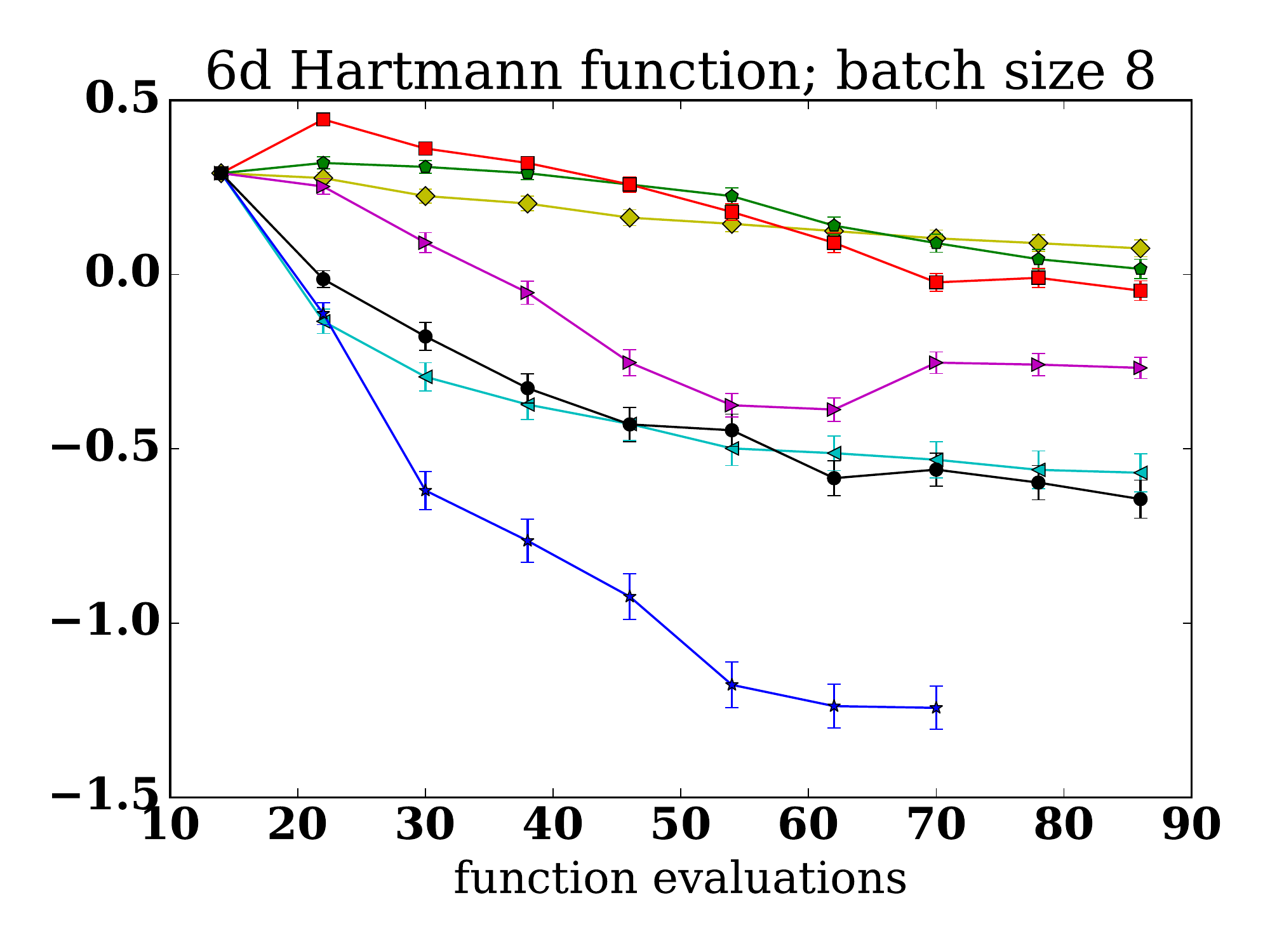}
  \subfigure
  \centering
  \includegraphics[width=116pt, height = 112pt]{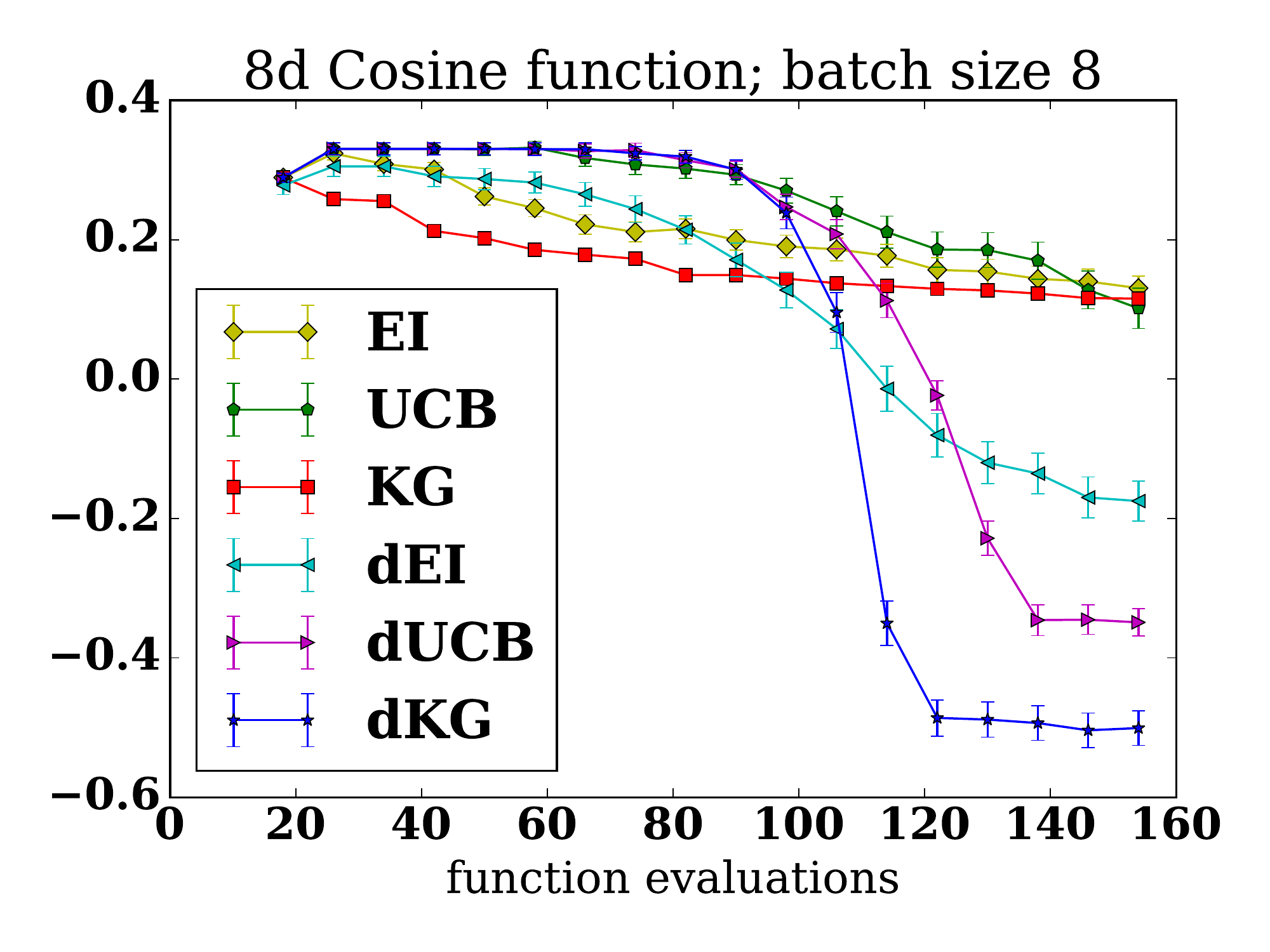}
\vspace{-8pt}
\caption{\small The average performance of 100 replications (the log10 of the immediate regret vs.\ the number of function evaluations). $\qKGg$ performs significantly better than its competitors for all benchmarks except Levy funcion. In Branin and Hartmann, we also plot black lines, which is the performance of BFGS.}
\vspace{-12pt}
\label{Fig_syn_bench}
\end{figure*}

\subsection{Real-World Test Functions}
\paragraph*{Weighted k-Nearest Neighbor.}
\label{sect:knn}
Suppose a cab company wishes to predict the duration of trips. Clearly, the duration not only depends on the endpoints of the trip, but also on the day and time.
In this benchmark we tune a weighted k-nearest neighbor (KNN) metric to optimize predictions of these durations, based on historical data. 
A trip is described by the pick-up time~$t$,  the pick-up location~$(p_1, p_2)$, and the drop-off point~$(d_1, d_2)$.
Then the estimate of the duration is obtained as a weighted average over all trips~$D_{m,t}$ in our database that happened in the time interval~$t \pm m$ minutes, where~$m$ is a tunable hyperparameter:
$\text{Prediction}(t, p_1, p_2, d_1, d_2) = (\sum_{i \in D_{m,t}} \text{duration}_i \times \text{weight}(i))/( \sum_{i \in D_{m,t}} \text{weight}(i)).$
%
%\begin{align*}
%\text{Prediction}(t, p_1, p_2, d_1, d_2) = \frac{\sum_{i \in D_{m,t}} \text{duration}_i \times \text{weight}(i)}{ \sum_{i \in D_{m,t}} \text{weight}(i)}.
%\end{align*}
The weight of trip~$i \in D_{m,t}$ in this prediction is given by
$\text{weight}(i) = \left((t-t^i)^2/l_1^2 + (p_1-p^i_1)^2/l_2^2 + (p_2 - p^i_2)^2/l_3^2 + (d_1-d^i_1)^2/l_4^2 + (d_2-d^i_2)^2/l_5^2\right)^{-1}$,

%
%\begin{align*}
%\text{weight}(i) = \left(\frac{(t-t^i)^2}{l_1^2} + \frac{(p_1-p^i_1)^2}{l_2^2} + \frac{(p_2 - p^i_2)^2}{l_3^2} + \frac{(d_1-d^i_1)^2}{l_4^2} + \frac{(d_2-d^i_2)^2}{l_5^2}\right)^{-1},
%\end{align*}
where $(t^i, p^i_1, p^i_2, d^i_1, d^i_2)$ are the respective parameter values for trip~$i$, and $(l_1, l_2, l_3, l_4, l_5)$ are tunable hyperparameters. 
Thus, we have $6$ hyperparameters to tune: $(m, l_1, l_2, l_3, l_4, l_5)$. We choose $m$ in $[30, 200]$, $l_1^2$ in $[10^1, 10^8]$, and $l_2^2, l_3^2, l_4^2, l_5^2$ each in $[10^{-8}, 10^{-1}]$.

We use the yellow cab NYC public data set from June~\citeyear{nyc-taxi2016}, sampling~$10000$ records from June 1 -- 25 as training data and~$1000$ trip records from June $26$ -- $30$ as validation data. 
Our test criterion is the root mean squared error (RMSE), for which we compute the partial derivatives on the validation dataset with respect to the hyperparameters $(l_1, l_2, l_3, l_4, l_5)$, while the hyperparameter~$m$ is not differentiable. 
In Fig.~\ref{Fig_real_bench} we see that $\qKGg$ overtakes the alternatives, and that UCB and KG acquisition functions also benefit from exploiting derivative information.
%
%Fig.~\ref{Fig_real_bench} summarizes the results for batch size~$q=8$.
%
%\vspace{-2mm}
%\begin{figure}[H]
%\centering
%\includegraphics[scale=0.40]{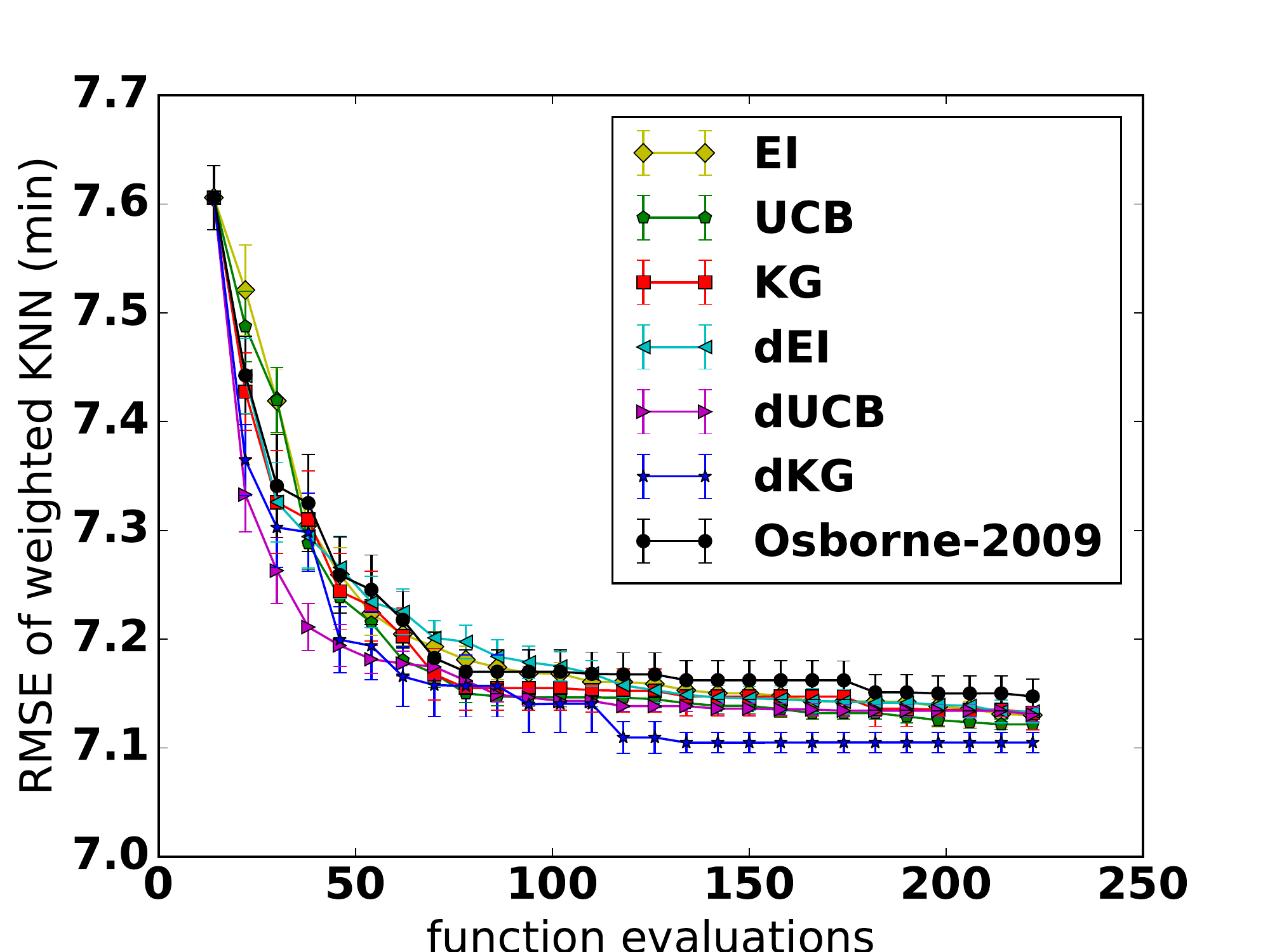}
%\caption{\small We tune 6 hyperparameters in the weighted KNN with batch size 8 where the first 5 derivatives available. We report the best function value for the KNN benchmark, averaged over $20$ replications.}
%\label{fig_knn}
%\end{figure}
\paragraph*{Kernel Learning.}
\label{sect:sm}
Spectral mixture kernels \citep{wilson2013gaussian} can be used for flexible kernel learning to enable long-range extrapolation.
These kernels are obtained by modeling a spectral density by a mixture of Gaussians.  While any stationary kernel can be described by a spectral mixture kernel with a particular setting of its hyperparameters, initializing and learning these parameters can be difficult.
%
%We examine the performance of the optimization algorithms for a complex kernel learning task.  
Although we have access to an analytic closed form of the (marginal likelihood) objective, this function is (i) expensive to evaluate and (ii) highly multimodal. Moreover, (iii) derivative information is available.  Thus, learning flexible kernel functions is a perfect candidate for our approach.

The task is to train a $2$-component spectral mixture kernel on an airline data set~\cite{wilson2013gaussian}. We must determine the mixture weights, means, and variances, for each of the two Gaussians.
Fig.~\ref{Fig_real_bench} summarizes performance for batch size~$q=8$. %Their performance is summarized in Fig.~\ref{Fig_specmix}.
BFGS is sensitive to its initialization and human intervention and is often trapped in local optima.
%either perform reasonably well, or become trapped in a bad local optima, depending highly on initialization and human intervention.  
$\qKGg$, on other hand, more consistently finds a good solution, and obtains the best solution of all algorithms (within the step limit).  Overall, we observe that gradient information is highly valuable in performing this kernel learning task.
%\begin{figure}[H]
%\centering
%  \subfigure
%  \centering
%\includegraphics[scale=0.40]{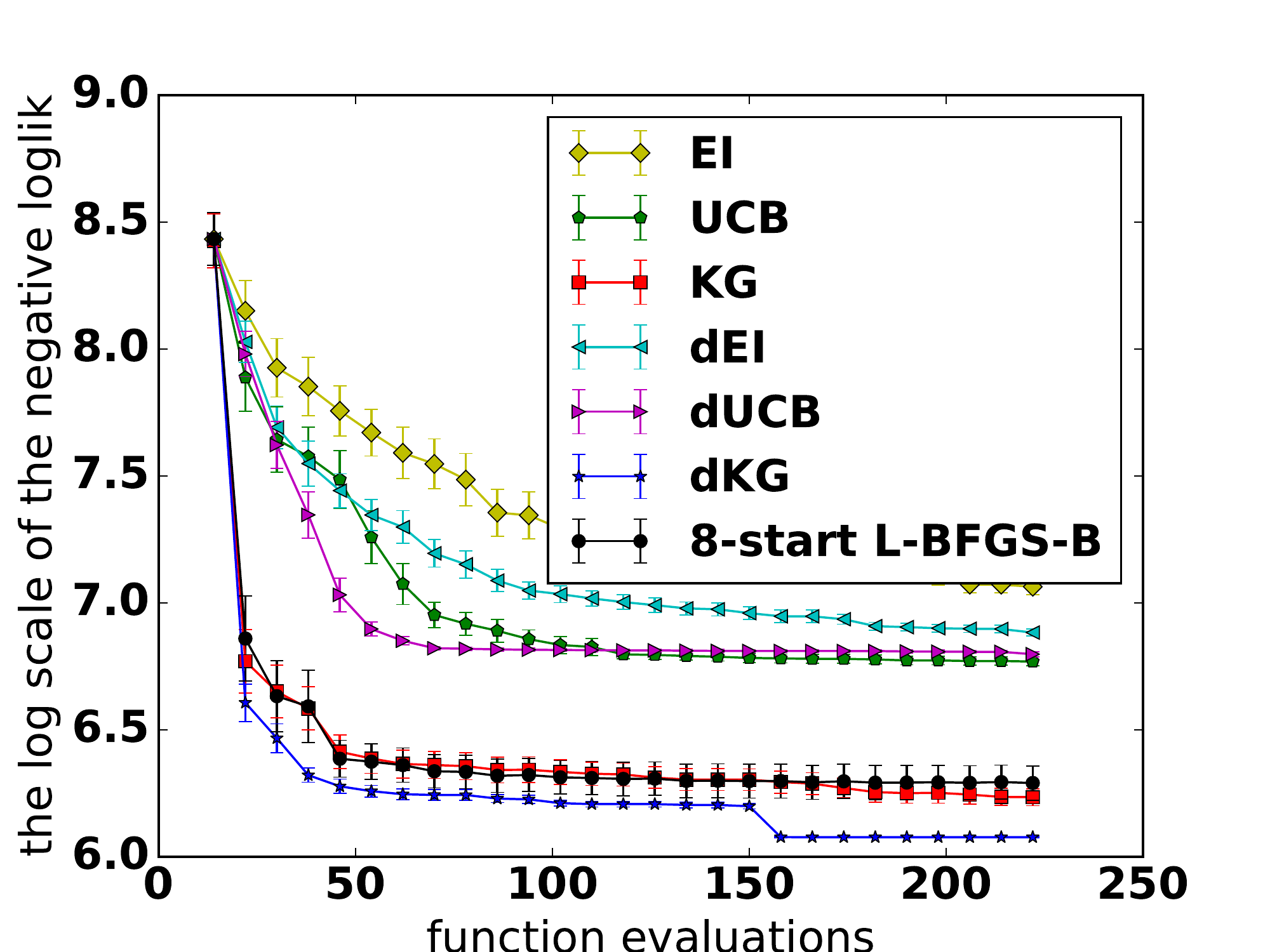}
%  \subfigure
%  \centering
%%\includegraphics[scale=0.14]{SMP.jpg}
%\caption{Top: The average performance for the spectral mixture kernel benchmark over $20$ replications. Bottom: The test performance of one final recommendation of hyperparameters found by $\qKGg$.}
%\label{Fig_specmix}
%\end{figure}
\paragraph*{Logistic Regression and Deep Learning.}
\label{sect:lr}
We tune logistic regression and a feedforward neural network with $2$ hidden layers on the MNIST dataset \citep{lecun1998mnist}, a standard classification task for handwritten digits. The training set contains $60000$ images, the test set~$10000$. 
We tune $4$ hyperparameters for logistic regression: the $\ell_2$ regularization parameter from $0$ to $1$, learning rate from $0$ to $1$, mini batch size from $20$ to $2000$ and training epochs from $5$ to $50$. The first derivatives of the first two parameters can be obtained via the technique of \citet{maclaurin2015gradient}. 
For the neural network, we additionally tune the number of hidden units in $[50,500]$. 

Fig.~\ref{Fig_real_bench} reports the mean and standard deviation of the mean cross-entropy loss (or its log scale) on the test set for $20$ replications.
$\qKGg$ outperforms the other approaches, which suggests that derivative information is helpful. 
%
%The logistic regression can be seen as a neural network with no hidden layers. T
Our algorithm proves its value in tuning a deep neural network, which harmonizes with research computing the gradients of hyperparameters \citep{maclaurin2015gradient, pedregosa2016hyperparameter}. 
\begin{figure}[H]
\vspace{-12pt}
     \centering
  \subfigure
  \centering
      \includegraphics[width=96pt, height = 102pt]{KNN.pdf}
   \subfigure
  \centering
       \includegraphics[width=96pt, height = 102pt]{SM.pdf}
    \subfigure
  \centering
        \includegraphics[width=96pt, height = 102pt]{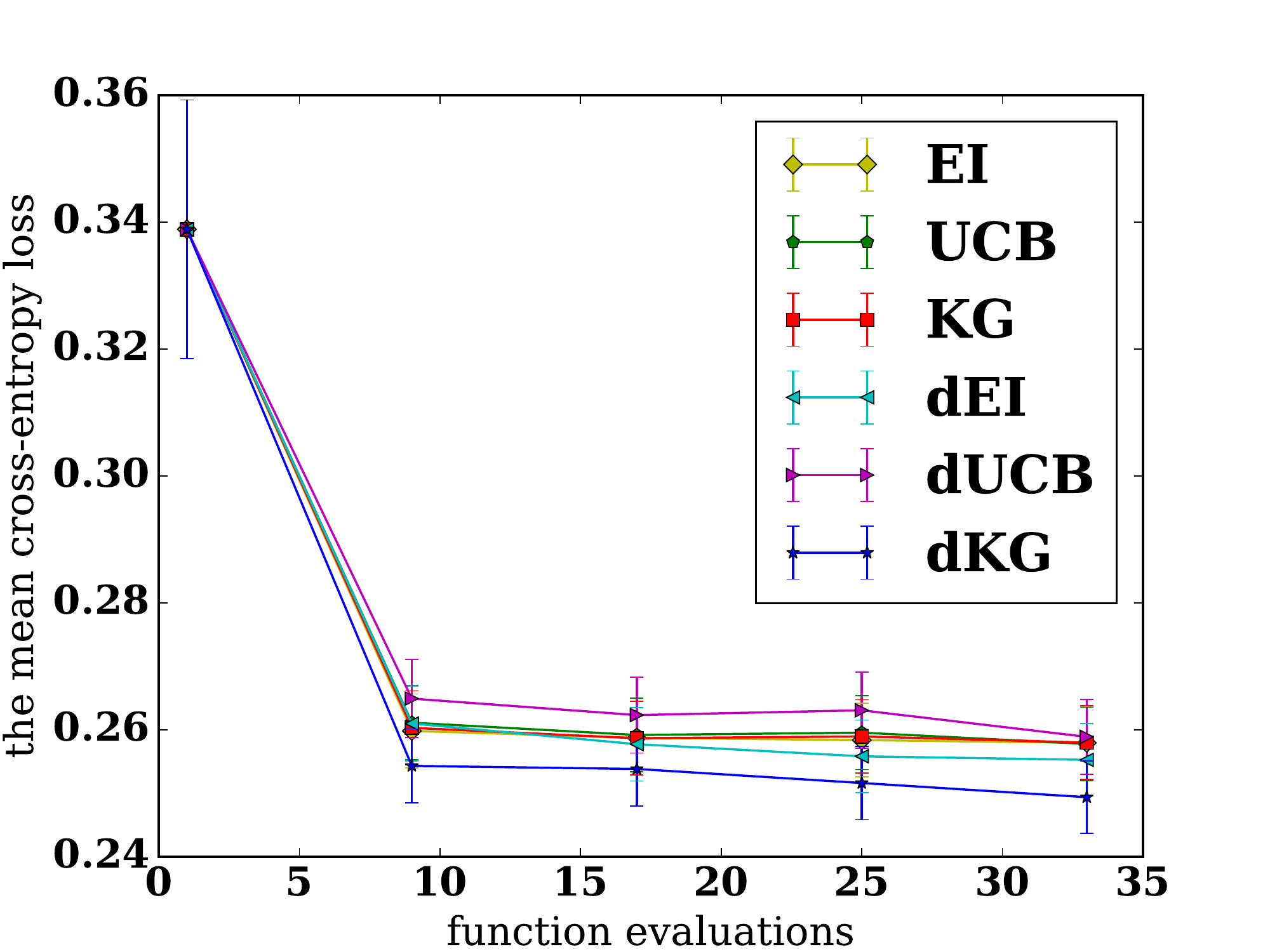}
    \subfigure
  \centering
        \includegraphics[width=96pt, height = 102pt]{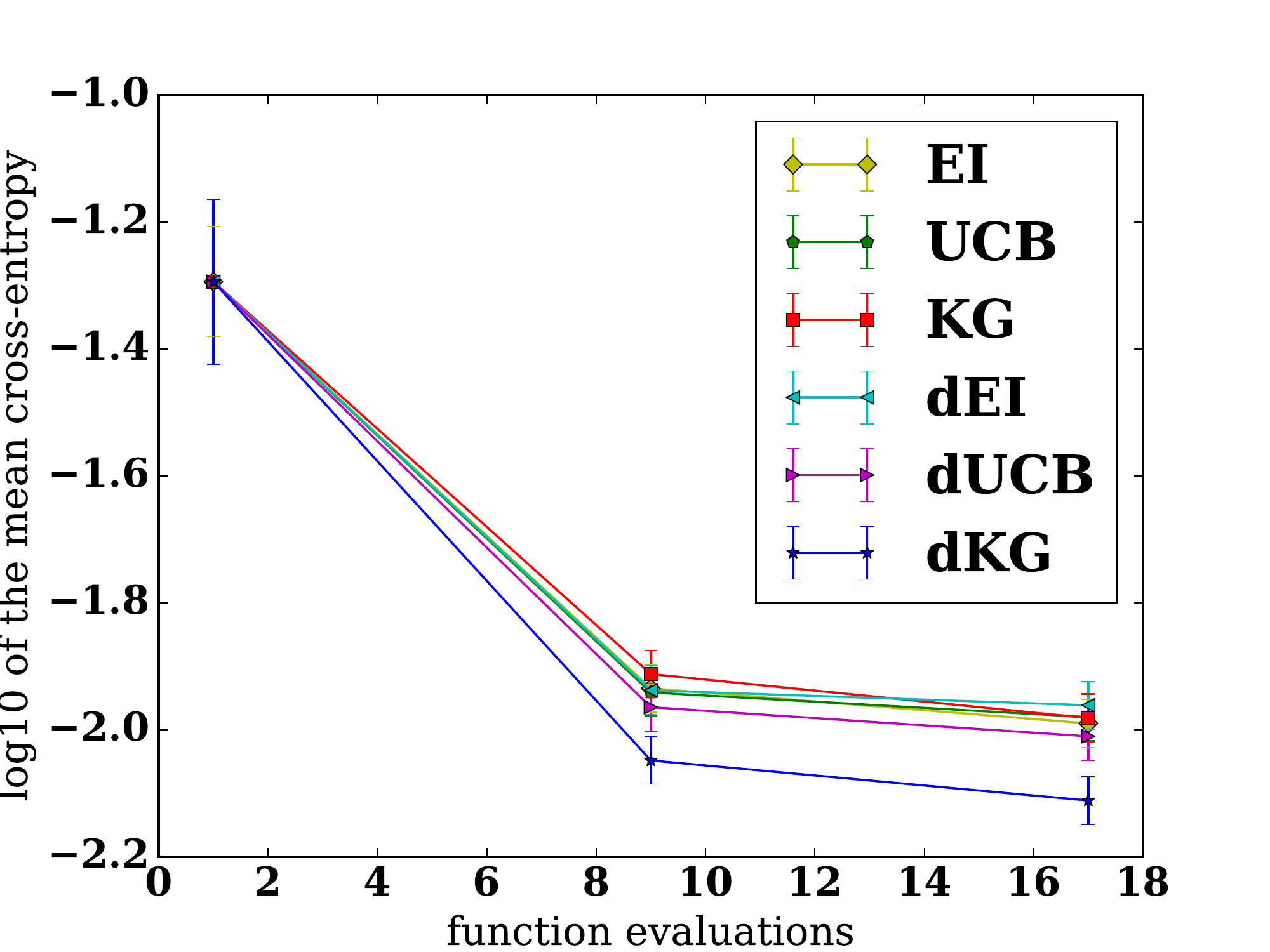}
	\caption{\small Results for the weighted KNN benchmark, the spectral mixture kernel benchmark, logistic regression and deep neural network (from left to right), all with batch size $8$ and averaged over $20$ replications.}
	%\vspace{-20pt}
	\label{Fig_real_bench}
\end{figure}

\section{Discussion}

Bayesian optimization is successfully applied to low dimensional problems where we wish to find a good 
solution with a very small number of objective function evaluations.  We considered several such benchmarks, 
as well as logistic regression, deep learning, kernel learning, and k-nearest neighbor applications.
We have shown that in this context
derivative information can be extremely useful: we can greatly decrease the number of objective function
evaluations, especially when building upon the knowledge gradient acquisition function, even when derivative information is 
noisy and only available for some variables.  

Bayesian optimization is increasingly being used to automate parameter tuning in machine learning, where objective functions can be extremely expensive to evaluate.  For example, the parameters to learn through Bayesian optimization could even be the hyperparameters of a deep neural network.
We expect derivative information with Bayesian optimization to help enable such promising applications, moving us towards fully automatic and principled approaches to statistical machine learning.  

In the future, one could combine derivative information
with flexible deep projections \citep{wilson2016deep},
and recent advances in scalable Gaussian processes for
$\mathcal{O}(n)$ training and $\mathcal{O}(1)$ test time predictions
\citep{wilson2015kernel, wilson2015thoughts}. These steps
would help make Bayesian optimization applicable to
a much wider range of problems, wherever standard
gradient based optimizers are used -- even when we
have analytic objective functions that are not expensive
to evaluate -- while retaining faster convergence
and robustness to multimodality.

\subsection*{Acknowledgments}
Wilson was partially supported by NSF IIS-1563887.
Frazier, Poloczek, and Wu were partially supported by NSF CAREER CMMI-1254298, NSF CMMI-1536895, NSF IIS-1247696, AFOSR FA9550-12-1-0200, AFOSR FA9550-15-1-0038, and AFOSR FA9550-16-1-0046.

\bibliography{../pKG}	

\clearpage

\appendix

\newpage

\begin{center}
	%{ \Large 
	%	\misotitle	
	%}
	
	{
		\LARGE 
		\bf		
		Bayesian Optimization with Gradients
	}
\vspace{2mm}

{\large\textit{Supplementary Material}}
\end{center}
\vspace{5mm}

\begin{center}
{\bf
Jian Wu$^1$ \quad
Matthias Poloczek$^2$ \quad
Andrew Gordon Wilson$^1$ \quad
Peter I. Frazier$^1$  \\
}
$^1$Cornell University, $^2$University of Arizona

\end{center}

\section{The Computation of $\qKGg$ and its Gradient: Additional Details}
\label{sect:discrete}
In this section, we show additional details in Sect.~\ref{sect:computation} of the main document: how~to provide unbiased estimators of~$\qKGg (\z{1:q}, \theta)$ and its gradient.
It is well-known that if $\tilde{\mu}^{(n)}$ and $\tilde{K}^{(n)}$ are the mean and the kernel function respectively of the posterior of $(f(x), \nabla f(x))^T$ after evaluating~$n$ points, then $(f(x), \theta^T \nabla f(x)))^T$ follows a bivariate Gaussian process with the mean function $\hat{\mu}^{(n)}$ and the kernel function $\hat{K}^{(n)}$ as follows
\begin{eqnarray*}
\hat{\mu}^{(n)}(x) = \begin{pmatrix}
1 & 0_{1 \times d}\\
0& \theta^T
\end{pmatrix} \tilde{\mu}^{(n)}(x)
\text{  and  }
\hat{K}^{(n)}(x^1, x^2) = \begin{pmatrix}
1 & 0_{1 \times d}\\
0& \theta^T
\end{pmatrix} \tilde{K}^{(n)}(x^1, x^2)
\begin{pmatrix}
1 & 0_{1 \times d}\\
0& \theta^T
\end{pmatrix}^T.
\end{eqnarray*}
Analogously, the $(y(x), \theta^T\nabla y(x))^T$ is also subject to noise,
\begin{equation*}
\begin{split}
\left(y(x), \theta^T\nabla y(x)\right)^T \Bigm| f(x), \theta^T\nabla f(x) \sim \mathcal{N} \left(\begin{pmatrix}f(x), \theta^T\nabla f(x)\end{pmatrix}^T, \diag{(\hat{\sigma}^2(x))}\right),
\end{split}
\end{equation*}
where $\hat{\sigma}^2(x) = \begin{pmatrix}
1 & 0_{1 \times d}\\
0& (\theta^T)^2
\end{pmatrix} \sigma^2(x)$.
%
%
%Recall that~$\by(\z{1:q})$ contains the function value and the~$d$ partial derivatives for each of the~$q$ points in the batch.
Following~\citet{wu2016parallel}, we express $\hat{\mu}^{(n+q)}(x)$ as
\begin{eqnarray*}
\hat{\mu}^{(n+q)}(x) &=& \hat{\mu}^{(n)}(x)+ \hat{K}^{(n)}(x,\z{1:q})\left(\hat{K}^{(n)}(\z{1:q},\z{1:q})\right.\\
&&\left.+\text{diag}\{\hat{\sigma}^2(\z{1}),\cdots, \hat{\sigma}^2(\z{q})\}\right)^{-1}\left((y, \theta^T\nabla y)(\z{1:q})-\hat{\mu}^{(n)}(\z{1:q})\right).
\end{eqnarray*} 
%\begin{eqnarray*}
%	e_1^T\tilde{\mu}^{(n+q)}(x) &=& e_1^T\tilde{\mu}^{(n)}(x)+ e_1^T \tilde{K}^{(n)}(x,\z{1:q})\\
%	&&\left(\tilde{K}^{(n)}(\z{1:q},\z{1:q})\right.\\
%	&&\quad\left.+\text{diag}\{\sigma^2(\z{1}),\cdots, \sigma^2(\z{q})\}\right)^{-1}\\
%	&& \left(\by(\z{1:q})-\tilde{\mu}^{(n)}(\z{1:q})\right).
%\end{eqnarray*} 
%
Conditioning on~$\z{1:q}$ and the knowledge after~$n$ evaluations, we have $(y, \theta^T\nabla y) (\z{1:q})$ is normally distributed with mean $\hat{\mu}^{(n)}(\z{1:q})$ and covariance matrix $\hat{K}^{(n)}(\z{1:q},\z{1:q})+\text{diag}\{\hat{\sigma}^2(\z{1}),\cdots, \hat{\sigma}^2(\z{q})\}$ where the function $(y, \theta^T\nabla y): \mathbb{R}^d \to \mathbb{R}^{2}$ maps the sample to its function and the directional derivative observation at direction $\theta$.
Thus, we can rewrite $\hat{\mu}^{(n+q)}(x)$ as
\begin{eqnarray}
\hat{\mu}^{(n+q)}(x) &=& \hat{\mu}^{(n)}(x) + \hat{\sigma}^{(n)}(x, \theta, \z{1:q})Z_{2q},
\label{eqn:alternative}
\end{eqnarray}
where $Z_{2q}$ is a $2q$-dimensional standard normal vector and
\begin{eqnarray}
\hat{\sigma}^{(n)}(x, \theta, \z{1:q})&=& \hat{K}^{(n)}(x,\z{1:q}) \left(\hat{D}^{(n)}(\z{1:q})^T\right)^{-1}.
\label{eqn:sigma_n}
\end{eqnarray}
%Here $\tilde{D}^{(n)}(\z{1:q})$ is the Cholesky factor of the covariance matrix $\tilde{K}^{(n)}(\z{1:q},\z{1:q})+\text{diag}\{\sigma^2(\z{1}),\cdots, \sigma^2(\z{q})\}$. Now we can compute the $\qKGg$ factor using Monte Carlo sampling. To compute the gradient of the $\qKGg$ factor, we apply infinitesimal perturbation analysis (IPA), which allows us to exchange the expectation operator and the gradient operator (see~\cite{wu2016parallel} for more details).
%
Here $\hat{D}^{(n)}(\z{1:q})$ is the Cholesky factor of the covariance matrix $\hat{K}^{(n)}(\z{1:q},\z{1:q})+\text{diag}\{\hat{\sigma}^2(\z{1}),\cdots, \hat{\sigma}^2(\z{q})\}$. Now we can follow Sect.~\ref{sect:computation} of the main document to estimate~$\qKGg (\z{1:q}, \theta)$ and its gradient.

\section{Proof of Proposition~\ref{pro:larger-voi} and Proposition~\ref{pro:one-step-optimal}}
\label{sect:bayes}
\begin{proof}[Proof of Proposition~\ref{pro:larger-voi}]
Recall that we start with the same posterior $\tilde \mu^{(n)}$. Then
\begin{eqnarray}
\label{eqn:additional-voi}
\qKGg (\z{1:q} ) 
&=&\min_{x \in \mathbb{A}} \tilde{\mu}^{(n)}_1(x)-\mathbb{E}_n\left[\min_{x \in \mathbb{A}} \mathbb{E}_n \left[ y(x) | y(\z{1:q}), \nabla y(\z{1:q})\right] \Bigm| \z{1:q} \right],\nonumber\\
&=&\min_{x \in \mathbb{A}} \tilde{\mu}^{(n)}_1(x)-\mathbb{E}_n\left[\mathbb{E}_n\left[\min_{x \in \mathbb{A}} \mathbb{E}_n \left[ y(x) | y(\z{1:q}), \nabla y(\z{1:q})\right] \big| y(\z{1:q})\right] \Bigm| \z{1:q} \right],\nonumber\\
&\ge&\min_{x \in \mathbb{A}}\tilde{\mu}^{(n)}_1(x)-\mathbb{E}_n\left[\min_{x \in \mathbb{A}} \mathbb{E}_n\left[\mathbb{E}_n \left[y(x) | y(\z{1:q}), \nabla y(\z{1:q})\right] \big| y(\z{1:q})\right] \Bigm| \z{1:q} \right],\nonumber\\
&=&\min_{x \in \mathbb{A}} \tilde{\mu}^{(n)}_1(x)-\mathbb{E}_n\left[\min_{x \in \mathbb{A}} \mathbb{E}_n \left[ y(x) | y(\z{1:q})\right] \Bigm| \z{1:q} \right],\nonumber\\
&=&\qKG (\z{1:q} ),
\end{eqnarray}
where recall that $y(x)$ is the observed function value at $x$, and $\nabla y(x)$ are the $d$ derivative observations at $x$. The inequality above holds due to Jensen's inequality. 

Now we will show that the inequality is strict when $x^*(y(\z{1:q}), \nabla y(\z{1:q}))$ depends on $\nabla y(\z{1:q})$, where $x^*(y(\z{1:q}), \nabla y(\z{1:q})) \in \arg\min_{x \in \mathcal{A}} \mathbb{E}_n \left[ y(x) | y(\z{1:q}), \nabla y(\z{1:q})\right]$.  Equality holds only if there exists a set $\mathcal{S}_1$ such that: (1) $\text{P}(y(\z{1:q}) \in \mathcal{S}_1) = 1$; (2) for any given $y(\z{1:q}) \in \mathcal{S}_1$, $\min_{x \in \mathbb{A}} \mathbb{E}_n \left[ y(x) | y(\z{1:q}), \nabla y(\z{1:q})\right]$ is a linear function of $\nabla y(\z{1:q})$ 
for all $\nabla y(\z{1:q})$ in a set $\mathcal{S}_2$ (allowed to depend on $y(\z{1:q})$) such that
$\text{P}(\nabla y(\z{1:q}) \in \mathcal{S}_2 \mid y(\z{1:q})) = 1$.

By \eq{posterior}, we can express $\tilde{\mu}^{(n+q)}(x)$ as
\begin{eqnarray*}
&&\mathbb{E}_n \left[ y(x) | y(\z{1:q}), \nabla y(\z{1:q})\right] \\
&=&\tilde{\mu}^{(n+q)}(x)\\
&=& \tilde{\mu}(x)+ \tilde{K}(x,\x{1:n} \cup \z{1:q})\left(\tilde{K}(\x{1:n} \cup \z{1:q}, \x{1:n} \cup \z{1:q})\right.\\
&&\left.+\text{diag}\{\sigma^2(\x{1:n} \cup \z{1:q})\}\right)^{-1}\left((y, \nabla y)(\x{1:n} \cup \z{1:q})-\tilde{\mu}(\x{1:n} \cup \z{1:q})\right).
\end{eqnarray*} 
Then condition (2) holds only if $\tilde{K}\left(x^*(y(\z{1:q}), \nabla y(\z{1:q})), \x{1:n} \cup \z{1:q} \right)$ is constant for all $\nabla y(\z{1:q})$
(in a conditionally almost sure set $\mathcal{S}_2$ allowed to depend on $y(\z{1:q})$),
which holds only if $x^*(y(\z{1:q}), \nabla y(\z{1:q}))$ are the same for all $\nabla y(\z{1:q}) \in \mathcal{S}_2$.  Thus, the inequality is strict in settings where $\nabla y(\z{1:q})$ affects $x^*(y(\z{1:q}), \nabla y(\z{1:q}))$.
\end{proof}
Next we analyze the Bayesian optimization problem under a dynamic programming (DP) framework and show that $\qKGg$ is one-step Bayes-optimal. 

\begin{proof}[Proof of Proposition~\ref{pro:one-step-optimal}]
Suppose that we are given a budget of~$N$ samples, i.e.\ we may run the algorithm for~$N$ iterations. Our goal is to choose sampling decisions $(\{z^i, 1 \le i \le Nq\}$ and the implementation decision $z^{Nq+1}$ that minimizes $f(z^{Nq+1})$. We assume that $\left(f(x), \nabla f(x)\right)$ is drawn from the prior $\mathcal{GP}(\tilde \mu, \tilde K)$, then $\left(f(x), \nabla f(x)\right)$ follows the posterior process $\mathcal{GP}(\tilde \mu^{(Nq)}, \tilde K^{(Nq)})$ after $N$ iterations, so we have $\E_{Nq}(f(z^{Nq+1})) = \tilde \mu^{(Nq)}_1(z^{Nq+1})$. Thus, letting $\Pi$ be the set of feasible policies $\pi$, we can formulate our problem as follows
\begin{eqnarray*}
\inf_{\pi \in \Pi} \E^{\pi}\left[\min_{x \in \mathbb{A}} \tilde \mu^{(Nq)}_1(x)\right].
\end{eqnarray*}
We analyze this problem under the DP framework. We define our state space as $S^n := (\tilde \mu^{(nq)}, \tilde K^{(nq)})$ after iteration $n$ as it completely characterizes our belief on $f$. Under the DP framework, we will define the value function $V^n$ as follows
\begin{eqnarray}
\label{eqn:value-function}
V^n(s) := \inf_{\pi \in \Pi} \E^{\pi}\left[\min_{x \in \mathbb{A}} \tilde \mu^{(Nq)}_1(x) \big| S^n = s\right]
\end{eqnarray}
for every $s=(\mu, K)$. The Bellman equation tells us that the value function can be written recursively as
\begin{eqnarray*}
V^n(s) &= &\min_{\bm{z} \in \mathbb{A}^q} Q^n(s, \bm{z})
\end{eqnarray*}
where
\begin{eqnarray*}
Q^n(s, \bm{z}) &=& \E \left[V^{n+1}(S^{n+1}) | S^n = s, \z{(nq+1):(n+1)q} = \bm{z} \right]
\end{eqnarray*}
At the same time, we also know that any policy $\pi^*$ whose decisions satisfy
\begin{eqnarray}
\label{eqn:optimality}
Z^{\pi^*, n}(s) \in \argmin_{\bm{z} \in \mathbb{A}^q} Q^n(s, \bm{z})
\end{eqnarray}
is optimal. If we were to stop at iteration $n+1$, then $V^{n+1}(S^{n+1}) = \min_{x \in \mathbb{A}} \tilde \mu^{((n+1)q)}_1(x)$ and \eqn{optimality} reduces to 
\begin{eqnarray*}
Z^{\pi^*, n}(s) &\in& \argmin_{\bm{z} \in \mathbb{A}^q} \E \left[\min_{x \in \mathbb{A}} \tilde \mu^{((n+1)q)}_1(x) \bigm| S^n = s, \z{(nq+1):(n+1)q} = \bm{z}\right]\\
&=& \argmax_{\bm{z} \in \mathbb{A}^q} \Bigg\{\min_{x \in \mathbb{A}} \tilde \mu^{(nq)}_1(x) -\E \left[\min_{x \in \mathbb{A}} \tilde \mu^{((n+1)q)}_1(x) \Bigm| S^n = s, \z{(nq+1):(n+1)q} = \bm{z} \right]\Bigg\},
\end{eqnarray*}
which is exactly the $\qKGg$ algorithm. This proves that $\qKGg$ is one-step Bayes-optimal. 
\end{proof}

\section{Proof of Theorem~\ref{thr:optmal}}
Recall that we define the value function in Eq.~\eqn{value-function}. Similarly, we can define the value function for a specific policy $\pi$ as
\begin{eqnarray}
\label{eqn:value-function-pi}
V^{\pi, n}(s) := \E^{\pi}\left[\min_{x \in \mathbb{A}} \tilde \mu^{(Nq)}_1(x) | S^n = s\right].
\end{eqnarray}
Since we are varying the number of iterations $N$, we define $V^{0}(s; N)$ as the optimal value function when the size of the iteration budget is $N$. Additionally, we define $V(s; \infty) := \lim_{N \to \infty} V^{0}(s; N)$. Similarly, we define $V^{0, \pi}(s; N)$ and $V^{\pi}(s; \infty)$ for a specific policy $\pi$.

Next we will state two lemmas concerning the benefits of additional samples, which will be useful in the latter proofs. First we have the following result for any stationary policy $\pi$. A policy is called \emph{stationary} if the decision of the policy only depends on the current state $S^n:=(\tilde \mu^{(n)}, \tilde K^{(n)})$ (not on the iteration $n$). $\qKGg$ is stationary. 
\begin{lemma}
\label{lem:extra-measure}
For any stationary policy $\pi$ and state $s$, $V^{\pi, n}(s) \le V^{\pi, n+1}(s)$.
\end{lemma}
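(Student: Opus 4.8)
The plan is to combine two structural facts. Fact (i): for fixed $x$ the posterior-mean sequence $m\mapsto\tilde\mu_1^{(m)}(x)=\E[f(x)\mid\mathcal F_m]$ is a martingale in the number of observations $m$, where $\mathcal F_m$ denotes the information from the first $m$ samples; this is just the tower property. Fact (ii): because $\pi$ is \emph{stationary} and the GP-updating map is time-homogeneous, $V^{\pi,n}(s)$ depends on $(n,N)$ only through the number of remaining iterations $N-n$. Fact (ii) lets me define
\begin{eqnarray*}
g_\pi(s,m):=\E^\pi\!\left[\min_{x\in\mathbb{A}}\tilde\mu_1^{(mq)}(x)\;\Big|\;S^0=s\right],
\end{eqnarray*}
so that $V^{\pi,n}(s)=g_\pi(s,N-n)$, which reduces the claim $V^{\pi,n}(s)\le V^{\pi,n+1}(s)$ to showing $g_\pi(s,m)\le g_\pi(s,m-1)$ for every $m\ge1$ --- that is, that one extra batch of $q$ evaluations never increases the expected value of the final posterior minimum. (Equivalently, one can run a downward induction on $n$ using the Bellman identity $V^{\pi,n}(s)=\E^\pi[V^{\pi,n+1}(S^{n+1})\mid S^n=s]$ together with the same stationarity remark; I would use whichever version reads more cleanly.)

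The one-step estimate is then the heart of it. Conditioning on $\mathcal F_{(m-1)q}$ inside $g_\pi$ and using the tower property,
\begin{eqnarray*}
g_\pi(s,m)=\E^\pi\!\left[\;\E^\pi\!\left[\min_{x\in\mathbb{A}}\tilde\mu_1^{(mq)}(x)\;\Big|\;\mathcal F_{(m-1)q}\right]\;\Big|\;S^0=s\right].
\end{eqnarray*}
Since $\min$ is concave, Jensen's inequality gives $\E^\pi\big[\min_{x}\tilde\mu_1^{(mq)}(x)\mid\mathcal F_{(m-1)q}\big]\le\min_{x}\E^\pi\big[\tilde\mu_1^{(mq)}(x)\mid\mathcal F_{(m-1)q}\big]$; and because the batch locations placed at step $m$ are $\mathcal F_{(m-1)q}$-measurable, Fact (i) identifies $\E^\pi\big[\tilde\mu_1^{(mq)}(x)\mid\mathcal F_{(m-1)q}\big]=\tilde\mu_1^{((m-1)q)}(x)$. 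Hence $\E^\pi\big[\min_{x}\tilde\mu_1^{(mq)}(x)\mid\mathcal F_{(m-1)q}\big]\le\min_{x}\tilde\mu_1^{((m-1)q)}(x)$, and taking the outer expectation over the trajectory from $s$ yields $g_\pi(s,m)\le\E^\pi[\min_{x}\tilde\mu_1^{((m-1)q)}(x)\mid S^0=s]=g_\pi(s,m-1)$, as desired.

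The routine parts are: (a) checking integrability so the conditional Jensen and tower steps are legitimate --- when $\mathbb{A}$ is finite (the setting of \thr{optmal}) this is immediate, since $|\min_{x\in\mathbb{A}}\tilde\mu_1^{(mq)}(x)|\le\sum_{x\in\mathbb{A}}|\tilde\mu_1^{(mq)}(x)|$ and each posterior mean is Gaussian with finite expectation; and (b) spelling out the time-homogeneity/stationarity bookkeeping behind $V^{\pi,n}(s)=g_\pi(s,N-n)$, i.e.\ that the posterior update after a batch placed by the stationary rule $\pi$ is the same map regardless of the iteration index, so relabelling $n\mapsto0$, $N\mapsto N-n$ is valid. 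The one genuine subtlety --- the step I expect to need the most care --- is precisely this appeal to stationarity: for a non-stationary $\pi$ the decisions made after reaching state $s$ at iteration $n$ need not coincide with those made after reaching $s$ at iteration $n+1$, and the monotone comparison can then fail; the lemma is stated for stationary $\pi$ (and $\qKGg$ is stationary) exactly to rule this out. Everything else is the standard ``an extra observation only helps'' argument.
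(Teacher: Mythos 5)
Your proposal is correct, and it rests on exactly the same two ingredients as the paper's proof: the martingale (tower) property of the posterior mean for a non-anticipating policy, Jensen's inequality applied to the final batch, and stationarity of $\pi$. The difference is organizational. The paper proves the claim by backward induction on $n$: the base case $n=N-1$ is the Jensen step, and the inductive step pushes the inequality backward through the Bellman identity $V^{\pi,n}(s)=\E^{\pi}[V^{\pi,n+1}(S^{n+1})\mid S^n=s]$, invoking stationarity to identify $\E^{\pi}[V^{\pi,n+2}(S^{n+1})\mid S^n=s]$ with $V^{\pi,n+1}(s)$. You instead spend the stationarity up front, reindexing by the remaining horizon via $V^{\pi,n}(s)=g_\pi(s,N-n)$, after which the whole lemma collapses to a single conditioning at the penultimate stage followed by Jensen --- no induction needed. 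Your route is arguably more transparent about \emph{why} the result holds (``the two runs agree for the first $m-1$ batches, and the extra terminal batch can only help in expectation''), and it isolates stationarity as the one place where care is required, which you correctly flag; the paper's inductive version is the more standard dynamic-programming presentation and generalizes more mechanically to statements about $V^n$ versus $Q^n$ such as Lemma 2. Your integrability remark (a) is also consistent with how the paper handles the finite-$\mathbb{A}$ setting later, in the proof of Lemma 4. No gaps.
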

This lemma states that for any stationary policy, one additional iteration helps on average. 
\begin{proof}[Proof of Lemma~\ref{lem:extra-measure}]
We prove by induction on $n$. When $n=N-1$, by Jensen's inequality,
\begin{eqnarray*}
V^{\pi, N-1}(s) &=& \E^{\pi}\left[\min_{x} \tilde \mu^{(Nq)}_1(x) \bigm| S_{N-1}=s\right]\\
&\le& \min_{x} \E^{\pi}\left[\tilde \mu^{(Nq)}_1(x) \bigm| S_{N-1}=s\right]\\
&=&V^{\pi, N}(s).
\end{eqnarray*}
Then by the induction hypothesis,
\begin{eqnarray*}
V^{\pi, n}(s) &=& \E^{\pi}\left[V^{\pi, n+1}(S^{n+1})|S^n=s\right]\\
&\le& \E^{\pi}\left[V^{\pi, n+2}(S^{n+1})|S^n=s\right]\\
&=&V^{\pi, n+1}(s),
\end{eqnarray*}
where line 2 above is due to the induction hypothesis and line 3 is due to the stationarity of the policy and the transition kernel of $\{S^{n}: n\ge 0\}$. We conclude the proof.
\end{proof}
The following lemma is related to the optimal policy. It says that if allowed an extra fixed batch of samples, the optimal policy performs better on average than if no extra samples allowed.
\begin{lemma}
\label{lem:optimal-measure}
For any state $s$ and $\bm{z} \in \mathbb{A}$, $Q^n(s, x) \le V^{n+1}(s)$.
\end{lemma}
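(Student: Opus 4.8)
The inequality is the ``more samples never hurt'' statement for the optimal policy: giving the optimizer one extra forced batch at $\bm z$ before it plays its $N-n-1$ remaining batches from the belief $s$ cannot raise its expected terminal loss. I would prove it directly by a coupling argument whose only analytic ingredient is the conditional Jensen step already used for the base case of \lem{extra-measure}, namely $\E[\min_x g(x)\mid\mathcal F]\le\min_x\E[g(x)\mid\mathcal F]$, valid because $g\mapsto\min_x g(x)$ is concave.

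First I would unfold $Q^n(s,\bm z)=\E\!\left[V^{n+1}(S^{n+1})\mid S^n=s,\ \z{(nq+1):(n+1)q}=\bm z\right]$, where $S^{n+1}$ is the posterior obtained from $s$ by conditioning on the noisy observations $(y,\nabla y)(\bm z)$. Fix $\varepsilon>0$ and let $\rho$ be an $\varepsilon$-optimal policy for $V^{n+1}(s)$, i.e.\ one that, started from the belief $s$ with $N-n-1$ batches remaining, achieves value at most $V^{n+1}(s)+\varepsilon$. From the belief $S^{n+1}$ I would run the policy $\rho'$ that applies $\rho$'s decision rules while, at each step, feeding $\rho$ the belief with the $\bm z$-observation removed; since a policy is free to discard information, $\rho'$ is admissible, and therefore $V^{n+1}(S^{n+1})\le\E^{\rho'}\!\left[\min_{x\in\mathbb{A}}(\text{terminal posterior mean})\mid S^{n+1}\right]$.

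Next I would couple the $\rho$-run (from $s$) and the $\rho'$-run (from $S^{n+1}$) on a common sample path of $f\sim\mathcal{GP}(\tilde\mu,\tilde K)$ and common observation noise. By construction $\rho'$ queries exactly the batches $\rho$ would query from $s$ and in addition keeps the batch at $\bm z$; so, letting $\mathcal F_1$ be the $\sigma$-algebra generated by $s$ together with $\rho$'s observations and $\mathcal F_2\supseteq\mathcal F_1$ the one that also contains the $\bm z$-batch, the two runs end with terminal losses $\min_{x}\E[f(x)\mid\mathcal F_1]$ and $\min_{x}\E[f(x)\mid\mathcal F_2]$. Since $\mathcal F_1\subseteq\mathcal F_2$, the tower property gives $\E[f(x)\mid\mathcal F_1]=\E\!\left[\E[f(x)\mid\mathcal F_2]\mid\mathcal F_1\right]$, and conditional Jensen gives $\E\!\left[\min_x\E[f(x)\mid\mathcal F_2]\mid\mathcal F_1\right]\le\min_x\E[f(x)\mid\mathcal F_1]$. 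Taking total expectations and chaining,
\begin{align*}
Q^n(s,\bm z)=\E\!\left[V^{n+1}(S^{n+1})\right]
&\le \E\!\left[\min_{x\in\mathbb{A}}\E[f(x)\mid\mathcal F_2]\right]\\
&\le \E\!\left[\min_{x\in\mathbb{A}}\E[f(x)\mid\mathcal F_1]\right]\le V^{n+1}(s)+\varepsilon,
\end{align*}
and letting $\varepsilon\downarrow 0$ finishes the argument.

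I expect the only real difficulty to be bookkeeping: spelling out that $V^{n+1}$ evaluated at $S^{n+1}$ and at $s$ refers to the \emph{same} dynamic program --- the optimal expected value of $\min_{x\in\mathbb{A}}$ of the terminal posterior mean with $N-n-1$ further batches, regarded purely as a function of the current belief $(\mu,K)$ --- even though $S^{n+1}$ already encodes one more batch of data than $s$; and verifying carefully that $\rho'$ is a genuinely non-anticipating, admissible policy, so that the bound $V^{n+1}(S^{n+1})\le\E^{\rho'}[\,\cdot\mid S^{n+1}]$ is legitimate.
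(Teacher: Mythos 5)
Your argument is correct, but it takes a genuinely different route from the paper. The paper does not actually write out a proof of \lem{optimal-measure}: it states only that the proof is ``quite similar'' to that of \lem{extra-measure}, i.e.\ a backward induction on $n$ whose base case is the Jensen step $\E[\min_x \tilde\mu^{(Nq)}_1(x)] \le \min_x \E[\tilde\mu^{(Nq)}_1(x)]$ and whose inductive step propagates the inequality through the Bellman recursion using the stationarity of the transition kernel. You instead give a one-shot policy-simulation (coupling) argument: take an $\varepsilon$-optimal policy $\rho$ for $V^{n+1}(s)$, run the information-discarding imitation $\rho'$ from $S^{n+1}$, observe that the two coupled runs terminate with posterior means conditioned on nested $\sigma$-algebras $\mathcal F_1\subseteq\mathcal F_2$, and apply conditional Jensen exactly once at the terminal time. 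Both proofs rest on the same analytic ingredient (concavity of $g\mapsto\min_x g(x)$), but yours makes the ``extra information never hurts in expectation'' content explicit, works directly with the infimum over all admissible (not necessarily stationary) policies without needing the inf to be attained, and avoids the bookkeeping of interchanging the order of conditioning in the inductive step; the induction, by contrast, recycles the template already set up for \lem{extra-measure} and so is shorter to state given that lemma. The two points you flag as remaining ``bookkeeping'' --- that $V^{n+1}$ is a function of the belief alone (so it is legitimate to evaluate it at both $s$ and $S^{n+1}$ with the same number of remaining batches), and that $\rho'$ is non-anticipating --- are exactly the right things to verify, and both hold under the paper's DP formulation in Eq.~\eqn{value-function}. (As a side note, the lemma statement itself has a typo, writing $Q^n(s,x)$ for $Q^n(s,\bm{z})$; your reading of it is the intended one.)
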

As a direct corollary, we have $V^{n}(s) \le V^{n+1}(s)$ for any state $s$.
\begin{proof}[Proof of Lemma~\ref{lem:optimal-measure}]
The proof of Lemma~\ref{lem:optimal-measure} is quite similar to that of Lemma~\ref{lem:extra-measure}. We omit the details here.
\end{proof}

Recall that $V(s; \infty) := \lim_{N \to \infty} V^{0}(s; N)$.The lemma below shows that $V(s; \infty)$ is well defined and bounded below.
\begin{lemma}
\label{lem:value-function-bound}
For any state $s$, $V(s; \infty)$ exists and
\begin{eqnarray}
\label{eqn:lower-bound}
V(s; \infty) \ge U(s):=\E\left[\min_{x \in \mathbb{A}} f(x) | S^0=s\right].
\end{eqnarray}
\end{lemma}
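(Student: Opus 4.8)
The plan is to show that $N \mapsto V^{0}(s;N)$ is a monotone decreasing sequence that is bounded below by $U(s)$; existence of the limit $V(s;\infty)$ then follows from the monotone convergence theorem for real sequences, and the bound $V(s;\infty)\ge U(s)$ is inherited in the limit. Monotonicity is essentially the content of Lemma~\ref{lem:optimal-measure}: increasing the budget from $N$ to $N+1$ can only help the optimizer, since the optimal policy for budget $N$ is available (do nothing extra, or equivalently absorb the extra batch harmlessly) as a feasible policy for budget $N+1$. Concretely, I would argue that $V^{0}(s;N+1) = V^{1}(s')$ for an appropriate relabeling — or more cleanly, invoke the corollary stated right after Lemma~\ref{lem:optimal-measure} that $V^{n}(s)\le V^{n+1}(s)$, reindexed so that ``$N$ iterations remaining'' corresponds to the value function at stage $0$ of an $N$-horizon problem; shifting the horizon gives $V^{0}(s;N)$ nonincreasing in $N$.

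For the lower bound, the key observation is that for any budget $N$ and any feasible policy $\pi$,
\begin{eqnarray*}
V^{0,\pi}(s;N) = \E^{\pi}\left[\min_{x\in\mathbb{A}}\tilde\mu_1^{(Nq)}(x)\,\Bigm|\,S^0=s\right]
\ge \E^{\pi}\left[\min_{x\in\mathbb{A}}\E_{Nq}\left[f(x)\right]\,\Bigm|\,S^0=s\right],
\end{eqnarray*}
and since $\min_x \E_{Nq}[f(x)] \ge \E_{Nq}[\min_x f(x)]$ by Jensen's inequality (the minimum is a concave functional of the mean vector, or simply: the min of expectations dominates the expectation of the min), the right-hand side is at least $\E^{\pi}\big[\E_{Nq}[\min_x f(x)]\,\big|\,S^0=s\big] = \E\big[\min_x f(x)\,\big|\,S^0=s\big] = U(s)$, using the tower property and that conditioning on $S^0=s$ already fixes the law of $\min_x f(x)$ via the prior. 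Taking the infimum over $\pi$ gives $V^{0}(s;N)\ge U(s)$ for every $N$, hence $V(s;\infty)\ge U(s)$.

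I do not expect a serious obstacle here; the lemma is a soft consequence of monotone convergence plus Jensen. The one point requiring care is the bookkeeping that identifies ``optimal value with budget $N+1$'' with a stage-shifted instance of the finite-horizon value function so that the corollary to Lemma~\ref{lem:optimal-measure} applies verbatim, and checking that $U(s)$ is finite (which follows from integrability of $\min_{x\in\mathbb{A}} f(x)$ over the finite set $\mathbb{A}$ under the Gaussian prior, so the minimum of finitely many jointly Gaussian variables has finite expectation). With those in hand the limit exists in $[U(s),\infty)$ and the claimed inequality holds.
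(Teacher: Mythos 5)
Your proposal is correct and follows essentially the same route as the paper's proof: monotonicity of $V^{0}(s;N)$ in $N$ via the corollary to Lemma~\ref{lem:optimal-measure} (the paper writes this exactly as $V^0(s;N)-V^0(s;N-1)=V^0(s;N)-V^1(s;N)\le 0$), and the lower bound $V^0(s;N)\ge U(s)$ via the interchange $\min_x \E_{Nq}[f(x)]\ge \E_{Nq}[\min_x f(x)]$ followed by the tower property. The only content in the paper's proof you omit is a closing remark (not needed for the lemma as stated) that the same argument with Lemma~\ref{lem:extra-measure} gives existence of $V^{\pi}(s;\infty)$ for every stationary policy $\pi$, which is used later in the consistency proof.
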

\begin{proof}[Proof of Lemma~\ref{lem:value-function-bound}]
We will show that $V^0(s; N)$ is non-increasing in $N$ and bounded below from $U(s)$. This will imply that $V^0(s; \infty)$ exists and is bounded below from $U(s)$. To prove that $V^0(s; N)$ is non-increasing of $N$, we note that
\begin{eqnarray*}
&&V^0(s; N) - V^0(s; N-1)\\
&=& V^0(s; N) - V^1(s; N)\\
&\le& 0,
\end{eqnarray*}
by the corollary of \lem{optimal-measure}. To show that $V^0(s; N)$ is bounded below by $U(s)$, for every $N \ge 1$ and policy $\pi$,
\begin{eqnarray*}
\E^{\pi}\left[\min_{x} \tilde \mu^{(Nq)}_1(x) | S^0=s\right] &=& \E^{\pi}\left[\min_{x}\E^{\pi}_{N}\left[f(x)\right] | S^0=s\right]\\
&\ge& \E^{\pi}\left[\E^{\pi}_{N}\left[\min_{x} f(x)\right] | S^0=s\right]\\
&=&\E^{\pi}\left[\min_x f(x) | S^0=s\right]\\
&=&\E\left[\min_x f(x) | S^0=s\right] = U(s).
\end{eqnarray*}
Thus we have $V^0(s; N) \ge U(s)$. Taking the limit $N \to \infty$, we have $V(s, \infty) \ge U(s)$.

Similarly, we can show that $V^{\pi, 0}(s; N)$ is non-increasing in $N$ and bounded below from $U(s)$ for any stationary policy $\pi$ by exploiting \lem{extra-measure}. This implies that $V^{\pi}(S^0; \infty)$ exists for each stationary policy.
\end{proof}

For a policy $\pi$, $V^{\pi}(s, \infty) = U(s)$ means that the policy $\pi$ can successfully find the minimum of the function if the function is sampled from a GP with the mean and the kernel given by $s$. The following lemma is the key to prove asymptotic consistency. Recall that in \thr{optmal}, we assume that $\mathbb{A}$ is finite. When $\mathbb{A}$ is finite, we have
\begin{lemma}
\label{lem:infinitely-often}
If a stationary policy $\pi$ measures every alternative $x \in \mathbb{A}$ infinitely often almost surely in the noisy case or $\pi$ measures every alternative $x \in \mathbb{A}$ at least once in the noise-free case, then $\pi$ is asymptotically consistent and has value $U(s)$.
\end{lemma}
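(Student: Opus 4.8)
The plan is to show that, under either sampling assumption, the posterior mean evaluated at each point of the finite set $\mathbb{A}$ converges almost surely to the true function value, and then to propagate this convergence through the (finite) minimum and through the expectation defining $V^{\pi}(s;\infty)$.

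\emph{Step 1: pointwise convergence of the posterior mean.} Fix $x\in\mathbb{A}$. The posterior mean $\tilde\mu^{(nq)}_1(x) = \E^{\pi}[f(x)\mid S^{nq}]$ is a Doob martingale in $n$ with respect to the filtration generated by the observations; since $f(x)$ is Gaussian, hence integrable, this martingale is uniformly integrable and converges almost surely and in $L^1$ to $\E^{\pi}[f(x)\mid\mathcal F_\infty]$. To identify the limit I would control the posterior variance $\tilde K^{(nq)}_{11}(x,x) = \E^{\pi}[(f(x)-\tilde\mu^{(nq)}_1(x))^2\mid S^{nq}]$, the $(1,1)$ entry of $\tilde K^{(nq)}(x,x)$. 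Since conditioning on additional data never increases posterior variance, $\tilde K^{(nq)}_{11}(x,x)$ is at most the posterior variance obtained from the direct noisy observations of $f(x)$ alone, which after $m$ samples of $x$ is at most $\sigma^2(x)_1/m$ by a one-dimensional Gaussian computation; in the noisy case this tends to $0$ because $\pi$ samples $x$ infinitely often almost surely. Hence $\tilde\mu^{(nq)}_1(x)\to f(x)$ a.s. In the noise-free case a single exact observation of $(f(x),\nabla f(x))$ already forces $\tilde K^{(nq)}_{11}(x,x)=0$ and $\tilde\mu^{(nq)}_1(x)=f(x)$ for all subsequent $n$, so the same conclusion holds after finitely many iterations.

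\emph{Step 2: convergence of the minimum and identification of the value.} Because $\mathbb{A}$ is finite, Step 1 gives $\min_{x\in\mathbb{A}}\tilde\mu^{(nq)}_1(x)\to\min_{x\in\mathbb{A}}f(x)$ almost surely. Each $\tilde\mu^{(nq)}_1(x)$ is a uniformly integrable martingale, and a finite minimum of uniformly integrable families is uniformly integrable, so $\{\min_{x\in\mathbb{A}}\tilde\mu^{(nq)}_1(x)\}_n$ is uniformly integrable, which upgrades the a.s. limit to $L^1$ convergence. Therefore
\begin{eqnarray*}
V^{\pi}(s;\infty) &=& \lim_{N\to\infty}\E^{\pi}\left[\min_{x\in\mathbb{A}}\tilde\mu^{(Nq)}_1(x) \mid S^0=s\right] \\
&=& \E\left[\min_{x\in\mathbb{A}}f(x) \mid S^0=s\right] = U(s),
\end{eqnarray*}
where the limit exists by \lem{value-function-bound}. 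Thus $\pi$ has value $U(s)$. For asymptotic consistency, let $\mathcal M = \argmin_{x\in\mathbb{A}}f(x)$ and, if $\mathcal M\neq\mathbb{A}$, let $\varepsilon = \min_{x\notin\mathcal M}f(x) - \min_{x\in\mathbb{A}}f(x) > 0$. By Step 1 and finiteness of $\mathbb{A}$, almost surely there is $N_0$ such that for all $N\ge N_0$, $\tilde\mu^{(Nq)}_1(x) < \min f + \varepsilon/2$ for $x\in\mathcal M$ and $\tilde\mu^{(Nq)}_1(x) > \min f + \varepsilon/2$ for $x\notin\mathcal M$; hence $x^*(\pi,N) = \argmin_{x\in\mathbb{A}}\tilde\mu^{(Nq)}_1(x)\in\mathcal M$ and $f(x^*(\pi,N)) = \min_{x\in\mathbb{A}}f(x)$ for $N\ge N_0$ (the case $\mathcal M=\mathbb{A}$ is trivial), so $f(x^*(\pi,N))\to\min_{x\in\mathbb{A}}f(x)$ almost surely.

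The main obstacle is Step 1: rigorously establishing that the GP posterior variance at each repeatedly sampled point vanishes, despite correlated observations at other points and noisy, possibly incomplete, gradient observations. The clean route is the monotonicity of posterior variance under additional conditioning, which lets us bound the true posterior variance by the elementary variance coming from the direct function-value observations at that point alone; the remainder is bookkeeping with martingale convergence and uniform integrability, together with the finiteness of $\mathbb{A}$ to pass from pointwise statements to statements about the minimum and the recommendation.
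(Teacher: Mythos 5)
Your proof is correct and follows the same overall skeleton as the paper's: show that the posterior mean surface converges to $f$, upgrade convergence of the (finite) minimum to $L^1$ via uniform integrability, interchange limit and expectation to get $V^{\pi}(s;\infty)=U(s)$, and match this against the lower bound of Lemma~\ref{lem:value-function-bound}. The differences are in how the two key technical steps are justified, and your versions are arguably stronger. The paper disposes of the convergence $\tilde\mu^{(Nq)}_1\to f$ with a one-line appeal to a ``vector-version strong law of large numbers,'' which is not obviously applicable when observations at different points are correlated through the GP and gradient observations are incomplete; your Step 1 replaces this with a Doob-martingale argument combined with the monotonicity of Gaussian posterior variance under additional conditioning, which cleanly reduces the problem to the elementary $O(1/m)$ variance bound from the direct observations of $f(x)$ alone. (To close that step fully, note that the posterior variance tending to zero gives $\tilde\mu^{(nq)}_1(x)\to f(x)$ in $L^2$ by dominated convergence, and the $L^2$ limit must agree with the a.s.\ martingale limit $\E[f(x)\mid\mathcal F_\infty]$; you gesture at this and it is only a sentence of bookkeeping.) Your uniform-integrability argument (a finite minimum of uniformly integrable martingales is uniformly integrable) is also more direct than the paper's chain of Jensen and tower-property inequalities bounding $\bigl|\min_x\tilde\mu^{(Nq)}_1(x)\bigr|$ by $\E_{Nq}\bigl(\max_x|f(x)|\bigr)$, though both are valid. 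Finally, you explicitly prove the asymptotic-consistency claim about the recommended point via the $\varepsilon$-gap argument on the finite set $\mathbb{A}$, whereas the paper's proof establishes only the value identity $V^{\pi}(s;\infty)=U(s)$ and leaves the consistency of the recommendation implicit; making that step explicit is a genuine improvement since it is part of the lemma's statement.
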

\begin{proof}[Proof of Lemma~\ref{lem:infinitely-often}]
We assume that the measurement noise is of finite variance, it implies that the posterior sequence $\tilde \mu_1^{(Nq)}$ converges to the true surface $f$ by the vector-version strong law of large numbers if we sample every alternative infinitely often in the noisy case or at least once in the noise-free case. Thus, $\lim_{N \to \infty} \mu^{(Nq)} = f$ a.s., and $\lim_{N \to \infty} \min_{x \in \mathbb{A}} \tilde\mu^{(Nq)}_1(x) = \min_{x \in \mathbb{A}}f(x)$ in probability. Next we will show that $\min_{x \in \mathbb{A}} \tilde\mu^{(Nq)}_1(x)$ is uniformly integrable in $N$, which implies that $\min_{x \in \mathbb{A}} \tilde\mu^{(Nq)}_1(x)$ converges in $L_1$. For any fixed $K \ge 0$, we have
\begin{eqnarray*}
&&\E\left[\bigg|\min_{x \in \mathbb{A}} \tilde\mu^{(Nq)}_1(x)\bigg|1_{\{|\min_{x \in \mathbb{A}} \tilde\mu^{(Nq)}_1(x)| \ge K\}}\right]\\
&\le&\E\left[\max_{x \in \mathbb{A}}\bigg|\tilde\mu^{(Nq)}_1(x)\bigg|1_{\{\max_{x \in \mathbb{A}} |\tilde\mu^{(Nq)}_1(x)| \ge K\}}\right]\\
&=&\E\left[\max_{x \in \mathbb{A}}|\E_{Nq}(f(x))|1_{\{\max_{x \in \mathbb{A}} |\E_{Nq}(f(x))| \ge K\}}\right]\\
&\le&\E\left[\max_{x \in \mathbb{A}}\E_{Nq}(|f(x)|)1_{\{\max_{x \in \mathbb{A}}\E_{Nq}(|f(x)|) \ge K\}}\right]\\
&\le&\E\left[\E_{Nq}(\max_{x \in \mathbb{A}}|f(x)|)1_{{\{\E_{Nq}(\max_{x \in \mathbb{A}}|f(x)|) \ge K\}}}\right]\\
&=&\E\left[\E_{Nq}\left(\max_{x \in \mathbb{A}}|f(x)|1_{{\{\E_{Nq}(\max_{x \in \mathbb{A}}|f(x)|) \ge K\}}}\right)\right]\\
&=&\E\left[\max_{x \in \mathbb{A}}|f(x)|1_{{\{\E(\max_{x \in \mathbb{A}}|f(x)|) \ge K\}}}\right].
\end{eqnarray*}
Since $\max_{x \in \mathbb{A}}|f(x)|$ is integrable and $\text{P}(\max_{x \in \mathbb{A}}|f(x)|) \ge K) \le \E(\max_{x} |f(x)|)/K$ is bounded uniformly in $N$ and goes to zero as $K$ increases to infinity, Given that $\min_{x \in \mathbb{A}}\tilde\mu^{(Nq)}_1(x)$ converges in $L_1$, we have
\begin{eqnarray*}
V^{\pi}(s; \infty) &=& \lim_{N\to\infty}\E^{\pi}[\min_{x \in \mathbb{A}} \tilde\mu^{(Nq)}_1(x) \big| S^0=s]\\
&=& \E^{\pi}[\lim_{N\to\infty} \min_{x \in \mathbb{A}} \tilde\mu^{(Nq)}_1(x) \big| S^0=s]\\
&=& \E^{\pi}[\min_{x \in \mathbb{A}}f(x) \big| S^0=s]\\
&=&U(s).
\end{eqnarray*}
By Lemma~\ref{lem:value-function-bound} above, we conclude that $V^{\pi}(s; \infty) = V(s; \infty) = U(s)$.
\end{proof}

Then we will show that $\qKGg$ measures every alternative $x \in \mathbb{A}$ infinitely often in the noisy case or $\qKGg$ measures every alternative $x \in \mathbb{A}$ at least once when $N$ goes to infinity, which leads to the proof of~\thr{optmal}.
\begin{proof}[Proof of Theorem~\ref{thr:optmal}]
We focus on the noisy case for clarity. One can provide an identical proof for the noise-free case by replacing sampling infinitely often with sampling at least once.

By a similar proof with Lemma A.5 in \citet{frazier2009knowledge}, we can show that $S^{n}$ converges to a random variable $S^{\infty}:=(\tilde\mu^{\infty}, \tilde K^{\infty})$ as $n$ increases. By definition,
\begin{eqnarray*}
&&V^{N}(S^{\infty}) - Q^{N-1}(S^{\infty}; \z{1:q}) \\
&=& \min_{x} \tilde\mu^{\infty}_1(x) - \E\left[\min_{x} \left(\tilde\mu^{\infty}_1(x)+e_1^T\tilde\sigma^{\infty}(x, \z{1:q})Z_{q(d+1)}\right)\right]\\
&\ge& \min_{x} \tilde\mu^{\infty}_1(x) - \E\left[\min_{x \in \z{1:q}} \left(\tilde\mu^{\infty}_1(x)+e_1^T\tilde\sigma^{\infty}(x, \z{1:q})Z_{q(d+1)}\right)\right]
\end{eqnarray*}
If we have measured $z^{(1)}, \cdots, z^{(q)}$ infinitely often in the noisy case, there will be no uncertainty around $f(\z{1:q})$ in $S^{\infty}$, then $V^{N}(S^{\infty}) = Q^{N-1}(S^{\infty}; \z{1:q})$. Otherwise $V^{N}(S^{\infty}) > Q^{N-1}(S^{\infty}; \z{1:q})$, i.e. there are benefits measuring $\z{1:q}$. We define $E=\{x \in \mathbb{A}: \text{the number of times measuring $x < \infty$}\}$, then for any $x \in E$ and $\y{1:q} \subset E^c$, we have $Q^{N-1}(S^{\infty}; \z{1:(q-1)} \cup x ) < V^{N}(S^{\infty}) = Q^{N-1}(S^{\infty}; \y{1:q})$. By the definition of $\qKGg$, it will measure some $x \in E$, i.e. at least one of $x$ in $E$ is measured infinitely often, a contradiction.
\end{proof}

\end{document}